\newtheorem{theorem}{Theorem}
\newtheorem{lemma}{Lemma}
\newenvironment{sproof}{%
	\proof}{\endproof}
\crefname{section}{Sec.}{Secs.}
\Crefname{section}{Section}{Sections}
\Crefname{table}{Table}{Tables}
\crefname{table}{Tab.}{Tabs.}
\begin{document}

%%%%%%%%% TITLE - PLEASE UPDATE
\title{The Devil is in the Pose: Ambiguity-free 3D Rotation-invariant \\ Learning via Pose-aware Convolution}
\author{Ronghan Chen\textsuperscript{1,2,3} \quad \quad \quad
Yang Cong\textsuperscript{1,2}\thanks{The corresponding author is Prof. Yang Cong.}
\\
\textsuperscript{1}State Key Laboratory of Robotics, Shenyang Institute of Automation, Chinese Academy of Sciences
\thanks{This work is supported in part by the National Key Research and Development Program of China under Grant 2019YFB1310300 and the National Nature Science Foundation of China under Grant 62127807.}
\\
\textsuperscript{2}Institutes for Robotics and Intelligent Manufacturing, Chinese Academy of Sciences\\
\textsuperscript{3}University of Chinese Academy of Sciences\\
{\tt\small chenronghan@sia.cn, congyang81@gmail.com}
}
\maketitle

\begin{abstract}
	
	Rotation-invariant (RI) 3D deep learning methods suffer performance degradation as they typically design RI representations as input that lose critical global information comparing to 3D coordinates. 
	Most state-of-the-arts address it by incurring additional blocks or complex global representations in a heavy and ineffective manner. 
	In this paper, we reveal that the global information loss stems from an unexplored \textbf{pose information loss problem},
	which can be solved more efficiently and effectively as we only need to restore more lightweight local pose in each layer, and the global information can be hierarchically aggregated in the deep networks without extra efforts.
	To address this problem, we develop a \underline{P}ose-\underline{a}ware \underline{R}otation \underline{I}nvariant \underline{Conv}olution (\emph{i.e.}, \textbf{PaRI-Conv}), which dynamically adapts its kernels based on the relative poses. 
	To implement it, we propose an Augmented Point Pair Feature (APPF) to fully encode the RI relative pose information, and a factorized dynamic kernel for pose-aware kernel generation, which can further reduce the computational cost and memory burden by decomposing the kernel into a shared basis matrix and a pose-aware diagonal matrix. 
	Extensive experiments on shape classification and part segmentation tasks show that our PaRI-Conv surpasses the state-of-the-art RI methods while being more compact and efficient. 
	
\end{abstract}

%%%%%%%%% BODY TEXT
\section{Introduction}

\begin{figure}[t]
	\centering
			\vspace{-5pt}
	\includegraphics[width=0.48\textwidth]{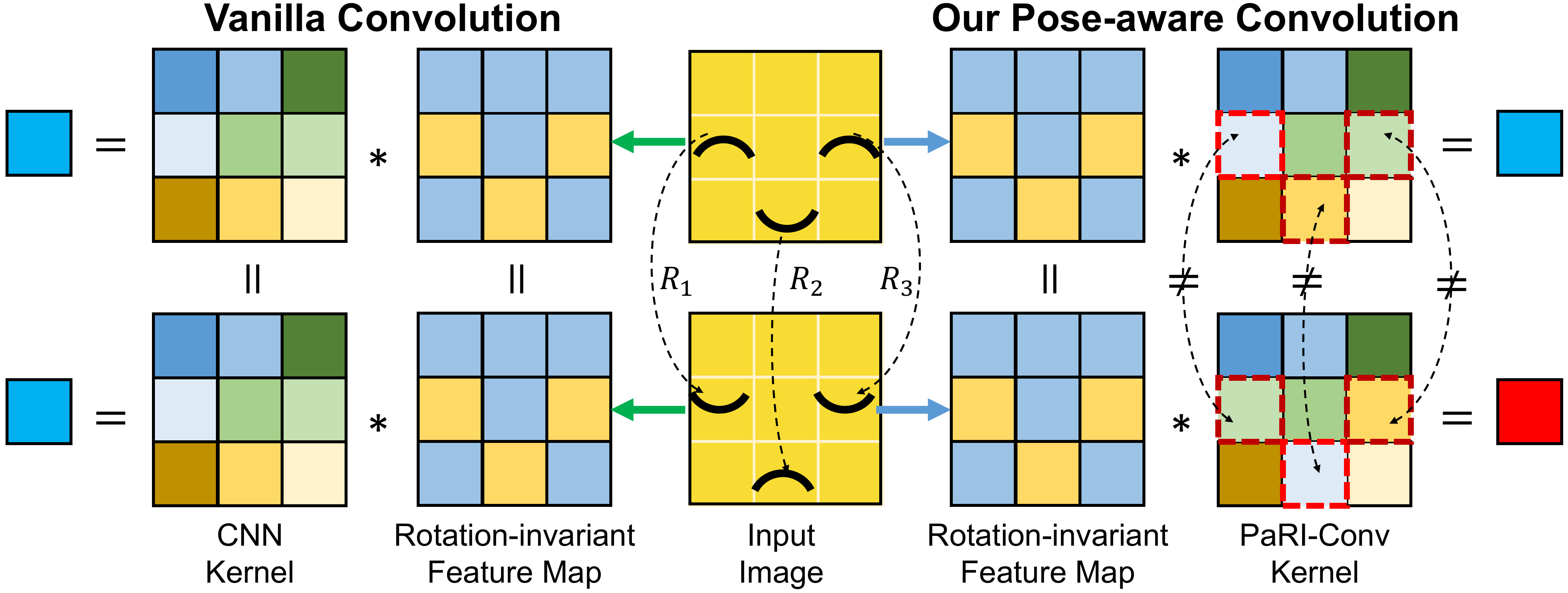}
		\vspace{-15pt}
	\caption{Illustration of the geometric ambiguity caused by the \emph{pose information loss} problem in rotation-invariant (RI) learning. RI features naturally neglect their own pose information, and remain the same under rotations $\{R_i\}_{i=1}^3$. Thus, the \textbf{vanilla convolution} (whether on images or 3D point clouds) cannot capture the relative poses between the eyes and the mouth, leading to an ambiguous representation that fails to distinguish a smile face from an angry face. Our \textbf{Pose-aware convolution} dynamically adapts the weight based on relative pose, thus eliminating the ambiguity.}\label{fig:intro}
			\vspace{-20pt}
\end{figure}

With the development of 3D scanning technology, deep learning on 3D point clouds has achieved remarkable progress in various tasks~\cite{guo2020deep}. However, most methods \cite{dgcnn, li2018pointcnn, liu2019relation} assume a strong prior, \ie, the data are pre-aligned to the same canonical pose, and the performance will degrade drastically on unaligned data, even when extensive rotation augmentation is applied. This hinders the application of current methods to real scenarios, where objects typically appear with arbitrary rotations. 

To this end, most methods design \emph{rotation-invariant} (RI) representations as input~\cite{chen2019clusternet,li2021rotation, NIPS2020KimLocal, zhang2020learning}, and have achieved consistent invariance to rotation. However, such RI features lose global positional information, comparing to 3D coordinates, leading to severe performance degradation.
Thus, state-of-the-art methods either design auxiliary blocks~\cite{zhao2019rotation,yu2020deep} that directly process coordinates, or handcraft complex representations that record pair-wise relationship in a much larger neighbourhood~\cite{li2021rotation,xu2021sgmnet}, which incur large computational cost while still being less competitive than rotation-sensitive methods~\cite{dgcnn,li2018pointcnn} on aligned data. Moreover, they fail to explain why 3D convolutional neural networks (CNNs) on point clouds~\cite{PointNet++,liu2019relation,xu2021sgmnet,thomas2019kpconv} cannot acquire intact global information from RI features via hierarchical abstraction, as 2D CNNs can from pixels, 
since both pixels and RI features are lack of positional information.

In this paper, we reveal that the above issues stem from an inherent \emph{pose information loss problem}, 
which has not been well exploited before. Intrinsically, being invariant to rotation also means the loss of pose information. Thus, as illustrated in Figure~\ref{fig:intro}, when aggregating RI features via vanilla convolutions, the relative poses between these features are inevitably lost, leading to an ambiguous representation that fails to distinguish the smile face from the angry face. 
Moreover, this also explains the ineffectiveness of current methods\cite{chen2019clusternet,zhang2019rotation,xu2021sgmnet,NIPS2020KimLocal}, \ie, they only restore 3 degree-of-free (DoF) position rather than 6 DoF pose information.

To address the above challenges, we present \underline{P}ose-\underline{a}ware \underline{R}otation \underline{I}nvariant \underline{Conv}olutions (\emph{i.e.}, PaRI-Conv) to restore the pose information that is lost by the RI features. As shown in Figure~\ref{fig:intro}, in a convolution, our key idea is to generate a dynamic kernel weight for each neighbour based on its relative pose to the center. In this way, the relative poses between local neighbours are fully preserved in the derived feature. Specifically, we first propose an \emph{Augmented Point Pair Feature} (APPF) to represent the relative pose information, which extends the point pair feature (PPF)~\cite{drost2010model}, and is rigorously rotation-invariant.
Then the APPFs are applied to dynamically adjust convolution kernels. 
To reduce the expensive computational cost and memory burden introduced by storing heavy weight banks~\cite{xu2018spidercnn,thomas2019kpconv, xu2021paconv} or regressing large kernel matrix~\cite{simonovsky2017dynamic,wu2019pointconv}, we propose a \emph{factorized dynamic kernel}, which decomposes the kernel weight into a basis matrix that is shared by all neighbours and a low DoF diagonal matrix that is learned from the APPF. 
Since PaRI-Conv fully preserves the geometric relationship among adjacent patches by incorporating pose information, global context can be automatically acquired by simply stacking PaRI-Conv layers,
thus avoiding redundant blocks~\cite{zhao2019rotation} or complex representations~\cite{li2021rotation,xu2021sgmnet} for global information compensation, resulting in a much more compact and efficient network.
Extensive experiments on shape classification and part segmentation tasks show that our method surpasses state-of-the-art RI methods, while being consistently rotation-invariant. More importantly, our method approaches or even surpasses the state-of-the-art rotation-sensitive methods on aligned data, verifying PaRI-Conv's ability in preventing information loss.

In conclusion, the main contributions of our work are:
\begin{itemize}
	\vspace{-8pt}
	\item[$\bullet$] We reveal the pose information loss problem and address it by proposing a \underline{P}ose-\underline{a}ware \underline{R}otation \underline{I}nvariant \underline{Conv}olution operator (\emph{i.e.}, PaRI-Conv), leading to a more powerful and efficient solution for RI learning.
		\vspace{-8pt}
	\item[$\bullet$] A lightweight RI feature, named Augmented Point Pair Feature (APPF), is proposed to fully encode the pose of each neighbour relative to the center.
		\vspace{-8pt}
	\item[$\bullet$] To synthesize pose-aware kernel, a factorized dynamic kernel is designed by decomposing the kernel weight into a shared basis matrix and a pose-aware diagonal matrix, which is more compact and efficient without sacrificing the flexibility.
\end{itemize}

\section{Related Work}

\noindent
\textbf{Canonical Pose Prediction Methods.}
These methods learn to transform the shape into its canonical pose to avoid the rotation perturbation~\cite{qi2017pointnet,rotpredictor, li2021closer, sun2021canonical}. PointNet~\cite{qi2017pointnet} and following works~\cite{dgcnn,li2018pointcnn} use the Spatial Transformer Networks (STN)~\cite{STN}. RotPredictor~\cite{rotpredictor} introduces self-consistency and enables approximately equivariant pose estimation. Its stability is improved in~\cite{spezialetti2020learning} by using Spherical CNNs~\cite{cohen2018spherical} as backbone. Sun \etal~\cite{sun2021canonical} further learn category-level canonical pose by decomposing each shape into several capsules. Li \etal~\cite{li2021closer} leverage the prior of principal component analysis (PCA), and tackle its ambiguity by blending 24 PCA-based poses. Generally, above methods rely on exhaustive augmentation while only achieve approximate invariance that may not generalize to novel or partial objects. 

\noindent
\textbf{Rotation Invariant Learning on 3D Point Clouds.}
Most methods achieve rotation invariance by designing handcrafted RI features based on distances and angles~\cite{zhang2019rotation, chen2019clusternet, zhao2019rotation, zhang2020learning, yu2020deep, li2021rotation, xu2021sgmnet}. 
However, they only encode local geometry and lose global positional information~\cite{zhang2019rotation}. To alleviate this, ClusterNet~\cite{chen2019clusternet} has to build a strongly connected $k$nn-graph with $k=80$. Recent works design auxiliary blocks~\cite{zhao2019rotation,yu2020deep} to encode canonicalized points for global information compensation.
Other methods~\cite{li2021rotation, xu2021sgmnet} introduce RI representations that encode more global context, such as gram matrix, which suffers from high complexity for computing relative distances or angles between arbitrary point pairs. LRF-based methods~\cite{NIPS2020KimLocal} transform local point coordinates w.r.t the LRF to stay RI, which also loses the pose information. Equivariant methods allow the learned features to undergo predictable linear transformations following the rotations of the inputs~\cite{thomas2018tensor, poulenard2021functional, Deng_2021_ICCV, fuchs2020se}, thus can also achieve invariance without losing pose information. Yet they involve strict constraints on the convolution kernels, sacrificing their flexibility. On the contrary, our PaRI-Conv can perfectly solve the pose information loss problem, with informative convolution kernels, resulting in better performance.

\noindent
\textbf{Dynamic Kernels for Convolution on 3D Point Clouds.}
A line of works~\cite{wu2019pointconv, liu2019relation, wang2019graph} directly regress weight kernels based on the positions of neighbour points, leading to prohibitively large memory. Others maintain a weight bank and generate new weight matrix via linear combination~\cite{thomas2019kpconv, mao2019interpolated, xu2021paconv, xu2018spidercnn}. They typically relate the weights with anchor points and assemble new weight in a handcrafted manner~\cite{thomas2019kpconv, mao2019interpolated}, which limits their flexibility. Recently, PAConv~\cite{xu2021paconv} learns to dynamically assemble the kernel weights via score prediction. Generally, the weight bank requires high memory and is hard to be optimized jointly. On the contrary, our factorized dynamic kernel is more compact and efficient without sacrificing the performance.

\section{Problem Definition and Background}

We first provide necessary background and mathematically explain the pose information loss problem in existing RI networks, which we aim at solving in this paper.

\noindent
\textbf{Rotation Invariant Functions.}
Given a point cloud $P=[p_1, p_2,...,p_N]\in\mathbb{R}^{N\times3}$ with $N$ points, we can apply arbitrary rotation to it by $P=PR$, where $R\in {\rm SO}(3)$ is a $3\times3$ rotation matrix. A \emph{rotation-invariant} (RI) function $\Phi$ should satisfy:
\begin{equation}\label{def}
\Phi(P)=\Phi(PR), \ \ \  \forall R\in {\rm SO}(3).
\end{equation}

\noindent
\textbf{Convolutions on 3D Point Clouds.}
Generally, the input of a convolution layer on 3D point clouds is a point cloud $P\in \mathbb{R}^{N\times 3}$ with features $X=[x_1,...,x_N]\in \mathbb{R}^{N\times c_{in}}$, and output features $X'=[x_1',...,x_N']\in \mathbb{R}^{N\times c_{out}}$. The convolution operation $f$ at the reference point $p_r$ can be formulated as:
\begin{equation}\label{equ:3dconv1}
f(p_r) = \mathop{\bigwedge}_{j\in\mathcal{N}(p_r)}W_j\cdot h(p_j), 
\end{equation}
where $\mathcal{N}(p_r)$ denotes a local patch around $p_r$, $W_j$ is the kernel weight, and $h:\mathbb{R}^3\rightarrow\mathbb{R}^c$ denotes a non-linear function that maps the point coordinate $p_j$ to its feature $x_j$. $\bigwedge$ denotes an aggregation function, such as MAX, AVG or SUM. Early \emph{multi-layer perceptron} (MLP) based methods~\cite{qi2017pointnet, PointNet++} design isotropic kernels, meaning that $W_i=W_j,\forall i,j\in\mathcal{N}(p_r)$, which limits the ability of expressing location relationship of neighbours. More recently, some works~\cite{wu2019pointconv, mao2019interpolated, zhou2021adaptive} design position-adaptive kernels to solve this problem with $W_j=W(p_j,p_r)$.

\noindent
\textbf{Pose Information Loss Problem in Current RI Methods.}
Though above convolutions have achieved impressive performance on 3D raw coordinates, we discover that applying above convolutions on RI features inevitably loses \emph{relative pose information} between them, leading to local geometric ambiguity. Here, we extend Equation~\ref{equ:3dconv1} as:
\begin{equation}\label{equ:RIambiguity}
f(p_r|\bm{\Gamma}(p_r)) = \mathop{\bigwedge}_{j\in\mathcal{N}(p_r)}W_j\cdot h(p_j|\mathbf{\Omega}(p_j)), 
\end{equation}
where $\mathbf{\Omega}(p_j)$ is the receptive field of $h$ at point $p_j$, meaning that $h(p_j)$ only depends on $\mathbf{\Omega}(p_j)$. Then, the receptive field of $f$ in Equation~\ref{equ:RIambiguity} is defined as
$\bm{\Gamma}(p_r)=\bigcup_{j\in\mathcal{N}(p_r)}\mathbf{\Omega}(p_j).$

To explain the problem, we change the relative pose between local patches via a transformation $\mathcal{T}$:
\begin{equation}\label{equ:def_transformation}
\mathcal{T}(\bm{\Gamma}(p_r))=\bigcup_{j\in\mathcal{N}(p_r)}\mathbf{\Omega}(p_j)R_j,
\end{equation}
which means we separately rotate each patch $\bm{\Omega}(p_j)$ around $p_j$ via $R_j$ (\eg, rotating the eyes and mouth in Figure~\ref{fig:intro}, changing the smile face to the angry face). Then we perform convolution on the transformed data $\mathcal{T}(\bm{\Gamma}(p_r))$ by
\begin{equation}\label{equ:RIambiguity_2}
f(p_r|\mathcal{T}(\bm{\Gamma}(p_r))) = \mathop{\bigwedge}_{j\in\mathcal{N}(p_r)}W_j\cdot h(p_j|\mathbf{\Omega}(p_j)R_j).
\end{equation}
Given that $h$ is an RI function, we have $h(p_j|\mathbf{\Omega}(p_j))=h(p_j|\mathbf{\Omega}(p_j)R_j)$, which is the definition of RI functions given in Equation~\ref{def}. Thus, the right-hand side of Equation~\ref{equ:RIambiguity} and Equation~\ref{equ:RIambiguity_2} is equal, leading to
\begin{equation}\label{equ:def_am}
f(p_r|\bm{\Gamma}(p_r))=f(p_r|\mathcal{T}(\bm{\Gamma}(p_r))),
\end{equation}
which means current CNNs cannot identify the changes of the relative pose between local patches achieved by $\mathcal{T}$, and proves the derived feature will \emph{inevitably lose the relative pose information} between local patches. This can cause severe geometric ambiguity in the derived feature, \ie, as shown in Figure~\ref{fig:intro}, the feature derived from vanilla CNNs cannot distinguish the happy face from the angry face.

\section{Method}

\begin{figure*}[htbp]
	\centering
	\includegraphics[width=0.94\textwidth]{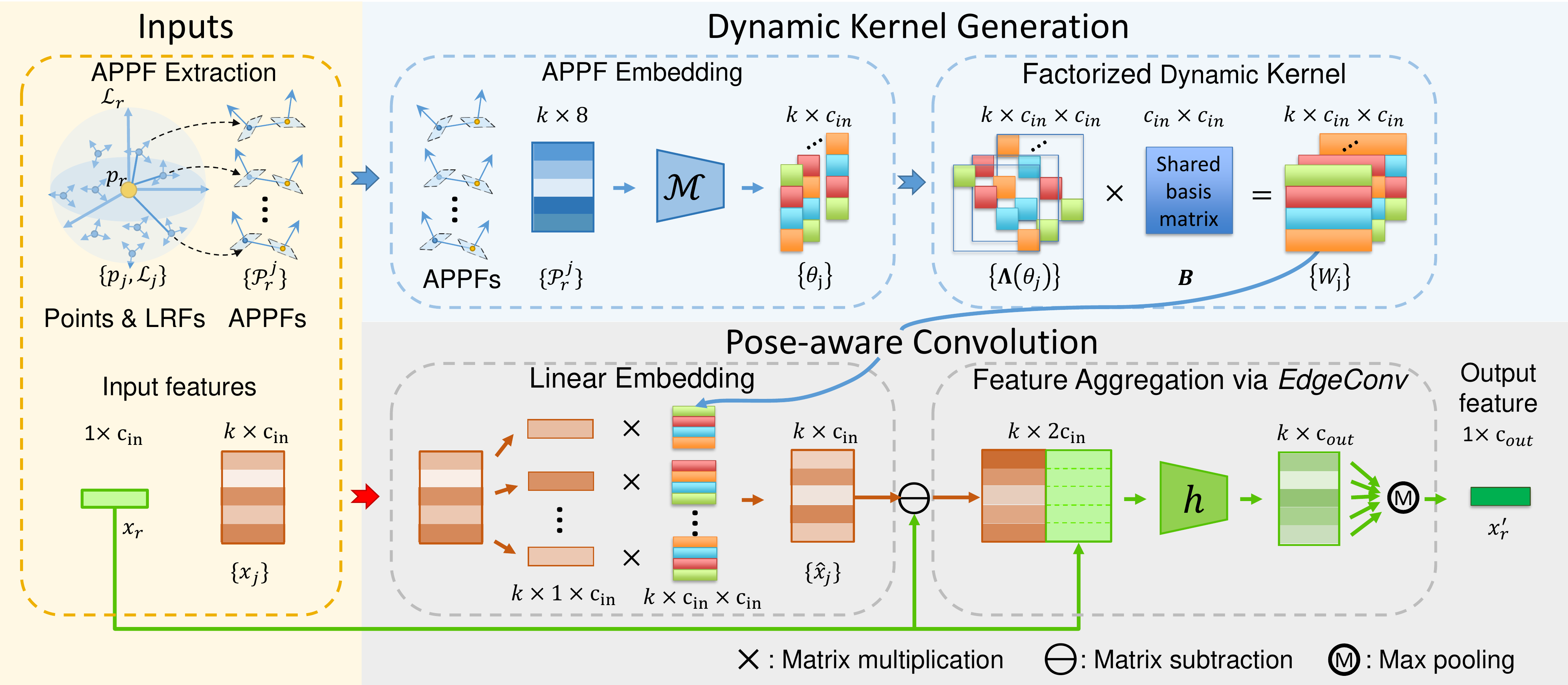}
	\caption{Overview of the proposed PaRI-Conv. The inputs of a PaRI-Conv layer are the LRF $\mathcal{L}_r$ and feature $x_r$ for a center point $p_r$, as well as the LRFs $\{\mathcal{L}_j\}_{j=1}^k$ and features $\{x_j\}_{j=1}^k$ for its $k$ neighbours $\{p_j\}_{j=1}^k$. To address the pose information loss problem, the key idea is to synthesize dynamic kernels $\{W_j\}_{j=1}^k$ based on the relative poses. We encode the relative pose between each neighbour $p_j$ and the center $p_r$ via the proposed APPF $\mathcal{P}_r^j$, which is extracted from their respective LRFs $\mathcal{L}_j$ and $\mathcal{L}_r$. Then $\mathcal{P}_r^j$ is encoded into $\theta_j$ via an MLP $\mathcal{M}$, and converted to a diagonal matrix $\bm{\Lambda}(\theta_j)$, which is multiplied with the shared basis matrix $B$ to synthesize the final kernel weight $W_j$. We name this strategy \emph{factorized dynamic kernel}. After the linear embedding, we aggregate the derived features $\{\hat{x}_j\}^k_{j=1}$ via EdgeConv.}\label{fig:main}
\end{figure*}

To counter the pose information loss problem, we propose a \emph{Pose-aware Rotation Invariant Convolution} (\ie, PaRI-Conv), which replaces the fixed kernels in vanilla convolutions with pose-aware dynamic kernels. 
Specifically, the general formulation of PaRI-Conv at a reference point $p_r$ can be defined as:
\begin{equation}
x_r' = \mathop{\bigwedge}_{j\in\mathcal{N}(p_r)}\mathcal{W}(\mathcal{P}_r^j)\cdot x_j, 
\end{equation}
where $\mathcal{W}$ is a dynamic kernel function that maps the relative pose $\mathcal{P}_r^j$ between the center point $p_r$ and its neighbour $p_j$ into respective kernel weight.
With the pose-aware kernel $\mathcal{W}(\mathcal{P}_r^j)$, $W_j$ in Equation~\ref{equ:RIambiguity} and~\ref{equ:RIambiguity_2} will no longer be equal, because rotating $p_j$ with $R_j$ changes its relative pose $\mathcal{P}_r^j$ to the center point. Thus, the ambiguity defined in Equation~\ref{equ:def_am} is readily eliminated. Meanwhile, the rotation invariance property is maintained because a global rotation will not change the relative poses $\mathcal{P}_r^j$ between arbitrary points (see supplementary material for more details).
The pipeline of our PaRI-Conv is illustrated in Figure~\ref{fig:main}. An RI feature, named \emph{Augmented Point Pair Feature} (APPF) is introduced to fully embed the pose information $\mathcal{P}_r^j$. Afterwards, we propose a \emph{factorized dynamic kernel} to formulate the kernel function $\mathcal{W}$, and the final output is aggregated from the derived features via the EdgeConv~\cite{dgcnn}.

\subsection{Augmented Point Pair Feature}\label{sec:APPF}

\noindent
\textbf{LRF Construction.}
\begin{figure}[htbp]
	\centering
	\includegraphics[width=0.4\textwidth]{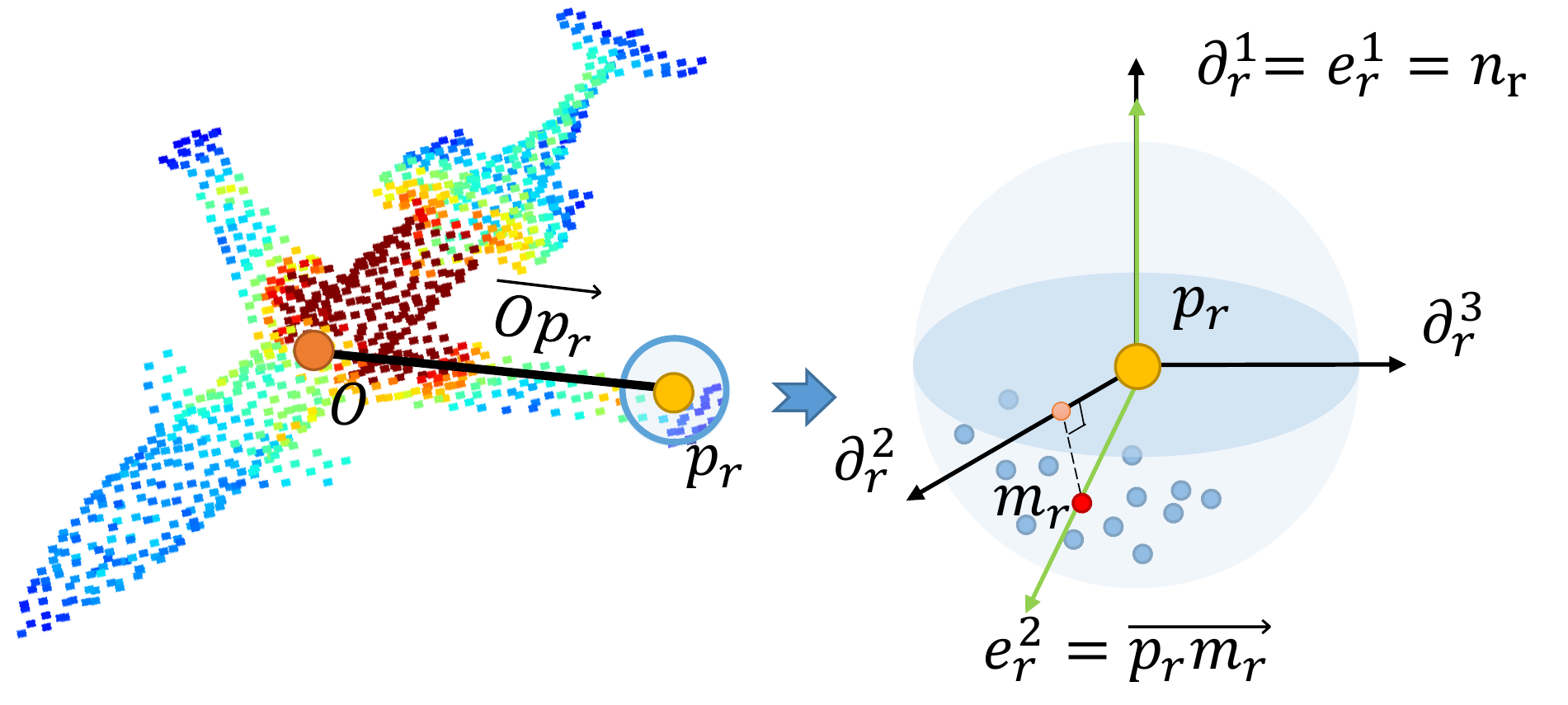}
	\caption{Illustration of the construction of LRF at point $p_r$.}\label{fig:lrf}
	\vspace{-15pt}
\end{figure}
To build RI representations, the LRF $\mathcal{L}_r$ is first constructed at each point $p_r\in P$, which consists of three orthonormal basis: $\mathcal{L}_r=[\partial_r^1, \partial_r^2, \partial_r^3]\in\mathbb{R}^{3\times 3}$. Typically, $\mathcal{L}_r$ is constructed by first defining two axes $e_r^1, e_r^2$, and then performing orthonormalization via
\begin{equation}\label{equation:lrf}
\partial_r^1=e_r^1,\ 
\partial_r^3=\frac{\partial_r^1\times e_r^2}{\|\partial_r^1\times e_r^2\|},\ 
\partial_r^2 = \partial_r^3\times\partial_r^1.
\end{equation}
There are a few choices for axes $e_r^1, e_r^2$. RI-GCN~\cite{NIPS2020KimLocal} computes LRF via PCA, which can be sensitive to perturbations. Many other methods turn to apply the vector between the global center $O$ and the local center point $p_r$~\cite{zhang2020learning, li2021rotation, yu2020deep}. Though being more consistent under rotations, the axis $\overrightarrow{Op_r}$ is related to the global location of the point $p_r$ rather than the local geometry, which leads to two serious problems: 1) the network struggles to learn consistent representations for identical structures that locate differently in the 3D shape, and 2) when partiality and background exist, the global center will be severely shifted, limiting the robustness to these perturbations. Thus, we propose to build the LRF upon local geometry only. Specifically, as shown in Figure~\ref{fig:lrf}, we define the first two axes as $e_r^1=n_r$ and $e_r^2=\overrightarrow{p_rm_r}=m_r-p_r$, where $n_r$ is the normal and $m_r=\frac{1}{k}\sum_{j=1}^{k}p_r^j$ is the barycenter of $k$ nearest neighbours around point $p_r$.

\noindent
\textbf{Feature Extraction.}
\begin{figure}[htbp]
	\centering
	\includegraphics[width=0.48\textwidth]{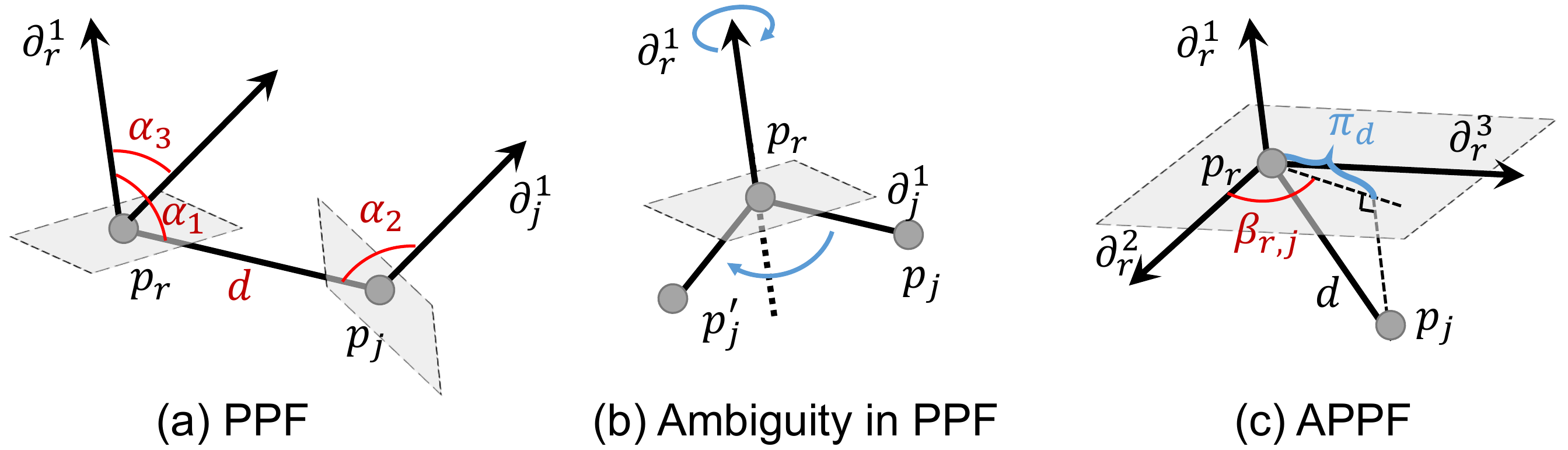}
	\caption{Illustration of the traditional PPF (a) and our proposed APPF (c). In (b), the PPF allows rotations around the axis $\partial_r^1$, leading to ambiguity. In (c), we fix the rotation via the angle $\beta_{r,j}$.}\label{fig:ppf}
	\vspace{-5pt}
\end{figure}
To capture the relative pose between the LRFs of the reference point $p_r$ and a neighbour point $p_j$, we design our RI feature based on point pair feature (PPF)~\cite{drost2010model}. As shown in Figure~\ref{fig:ppf}(a), given their primal axes $\partial_r^1, \partial_j^1$, the PPF is a 4D vector:
\begin{equation}\label{equ:ppf}
\begin{split}
{\rm PPF}(p_r,p_j)&=(\|d\|_2, \alpha_1, \alpha_2, \alpha_3),\\
\alpha_1=\angle(\partial_r^1, d),\ \alpha_2=&\angle(\partial_j^1, d),\ \alpha_3=\angle(\partial_r^1, \partial_j^1)),
\end{split}
\end{equation}
where $d=p_j-p_r$, and $\angle(\cdot,\cdot)$ denotes the angle between two vectors. However, as shown in Figure~\ref{fig:ppf}(b), PPF cannot fully define the relative pose because it allows $p_j$ to freely rotate around the axis $\partial_r^1$ of $p_r$~\cite{deng2018ppf}, and vice versa. Thus, we propose to fix the rotation by augmenting PPF with azimuth angles $\beta_{r,j}$ and $\beta_{j,r}$. As shown in Figure~\ref{fig:ppf}(c), we project $d$ on the $\partial_r^2\times \partial_r^3$ plane into $\pi_d$, and we record the azimuth angle as $\beta_{r, j}=\angle(\partial_r^2, \pi_{d})$. Similarly, $\beta_{j, r}$ can be derived by switching the roles of $p_r$ and $p_j$. To this end, our APPF $\mathcal{P}_r^j\in \mathbb{R}^8$ can be given by:
\begin{equation}
\begin{split}
\mathcal{P}_r^j=&(\|d\|_2, \cos(\alpha_1), \cos(\alpha_2), \cos(\alpha_3),\\
&\cos(\beta_{r, j}), \sin(\beta_{r, j}), \cos(\beta_{j, r}), \sin(\beta_{j, r})),
\end{split}
\end{equation}
where we further embed the angles with $\sin(\cdot)$ and $\cos(\cdot)$, following recent positional embedding techniques~\cite{mildenhall2020nerf,tancik2020fourier}. For analysis on the rotation invariance of APPF, please refer to the supplementary material. To restore the pose information lost by general convolution layers, the APPFs are then utilized to synthesize pose-aware dynamic kernel, which is introduced as follows. 

\subsection{Factorized Dynamic Kernel}

Current methods synthesize dynamic kernels by assembling memory intensive weight banks~\cite{xu2018spidercnn, mao2019interpolated, xu2021paconv} or regressing large kernel matrix~\cite{simonovsky2017dynamic,wu2019pointconv}, which incur heavy memory usage and large computational cost. To address this problem, we propose a new \emph{factorized dynamic kernel} (FDK). The key idea is to decompose the dynamic kernel $W_j=\mathcal{W}(\mathcal{P}_r^j)\in \mathbb{R}^{c_{in}\times c_{in}}$ into a lightweight pose-aware diagonal matrix $\bm{\Lambda}(\theta_{j})\in\mathbb{R}^{c_{in}\times c_{in}}$ and a pose-agnostic basis matrix $B\in\mathbb{R}^{c_{in}\times c_{in}}$ that is shared by all neighbours:
\begin{equation}\label{equ:FDK}
W_j = \bm{\Lambda}(\theta_{j}) B.
\end{equation}
As shown in Figure~\ref{fig:main}, given APPF $\mathcal{P}_r^j$ for each neighbour point, we first synthesize the diagonal matrix $\bm{\Lambda}(\theta_{j})$ by:
\begin{equation}\label{equ:dynamic_vector}
\theta_{j} = \mathcal{M}(\mathcal{P}_r^j), 
\end{equation}
where $\theta_{j}\in\mathbb{R}^{c_{in}}$ is the diagonal entries of $\bm{\Lambda}(\theta_{j})$, and $\mathcal{M}$ is a Multi-layer Perceptron (MLP). Then the pose-aware kernel $\bm{\Lambda}(\theta_{j})$ is multiplied with the shared basis matrix $B$ to generate the final kernel weight $W_j$, following Equation~\ref{equ:FDK}.

In this way, the complexity and memory usage can be significantly reduced since FDK only regresses a $c_{in}$-dim vector $\theta_j$ rather than a full $c_{in}\times c_{in}$ matrix as in~\cite{simonovsky2017dynamic,wu2019pointconv}. Moreover, FDK only requires one basis matrix $B$ shared by all neighbours. Thus, it requires much less parameters than the weight bank assembling methods~\cite{xu2018spidercnn}. We also compare their complexity in the ablation study (see Table~\ref{table:dy_ker}). 

\noindent
\textbf{Pose-aware Convolution.}
Before performing convolution with the learned pose-aware kernels $W_j$, there remains one problem. Generally, the convolution also considers the feature of the reference point $x_r$. However, in this case, the APPF $\mathcal{P}_r^r$ is a zero vector, leading to an invalid kernel matrix $W_r$. Thus, naive convolution will lose critical information from the feature of the reference point $x_r$. We address this problem by excluding $x_r$ when linear embedding:
\begin{equation}\label{lin_embed}
\hat{x}_j = W_jx_j, \quad j\neq r,
\end{equation}
%(\ie, $r\notin \mathcal{N}(p_r)$)
and then aggregating $x_r$ with neighbour features $\hat{x}_j$ via the EdgeConv~\cite{dgcnn}:
\begin{equation}\label{aggregation}
x_r'=\mathop{\rm MAX}_{j\in\mathcal{N}(p_r)} g((\hat{x}_j-x_r)\oplus x_r),
\end{equation}
where $g$ is a one-layer MLP and $\oplus$ denotes feature concatenation operation.

\subsection{Network Architecture}
Apart from the convolution, specialized architectures can also affect the performance, and obscure the true effectiveness of the convolution operators~\cite{liu2020closer}. To fairly evaluate the intrinsic effectiveness of PaRI-Conv and its compatibility with existing backbones, we directly integrate our PaRI-Conv into two known architectures.

\noindent
\textbf{Shape Classification.} 
We employ the classical DGCNN \cite{dgcnn} by simply replacing EdgeConv in it with our PaRI-Conv. Following DGCNN, we also perform $k$ nearest neighbour ($k$nn) search ($k=20$) in Euclidean space for the first layer and in the feature space for the rest of the layers.

\noindent
\textbf{Shape Part Segmentation.}
For part segmentation, larger context is very important. Thus, we apply pooling layers to enlarge the receptive field following AdaptConv~\cite{zhou2021adaptive}. Generally, it is similar to the classification network but deeper with 5 convolution layers and 3 pooling layers. We replace all the Conv-layers with our PaRI-Conv, except for the last graph convolution layer. Here, the $k$nn search is performed in the Euclidean space with $k=40$ in all layers.

\section{Experiments}

We evaluate the performance of the proposed PaRI-Conv on three challenging datasets, \ie, ModelNet40~\cite{wu20153d} for 3D shape classification, ShapeNetPart~\cite{shapenet} for part segmentation and ScanObjectNN~\cite{uy2019revisiting} for real-world shape classification. To evaluate the invariance under various rotations, we conduct experiments under 3 train$/$test settings, \ie, $z/z$, $z/{\rm SO}(3)$ and ${\rm SO}(3)/{\rm SO}(3)$, where $z$ denotes that the inputs are processed with rotations around the vertical axis, and ${\rm SO}(3)$ denotes arbitrary rotations.

\subsection{Implementation Details}
\noindent
\textbf{Weight Kernel Generation.}
We implement the function $\mathcal{M}$ in Equation~\ref{equ:dynamic_vector} as a 2-layer MLP that ends with a linear layer. The shared basis matrix $B$ is randomly initialized and optimized together with other network parameters.

\noindent
\textbf{LRF Setting.}
The default LRF introduced in Section~\ref{sec:APPF} utilizes the normal vector. We denote the results obtained under this setting as (pc+normal) in the experiments. For fair comparison with methods that only require point positions, we replace the normal with the vector between the global center and the point (\ie,$\overrightarrow{Op_r}$), and leave the second axis unchanged. We also evaluate the performance of PaRI-Conv under more LRFs settings in the ablation study.

\noindent
\textbf{Network Inputs.}
The rotation invariance of input is an essential condition for the rotation invariance of the network. We assign each point $P_i$ with an initial RI attribute following Spherical CNN~\cite{cohen2018spherical}, which are $\|p_i\|_2$, $\sin(\angle(\partial_i^1, p_i))$ and $\cos(\angle(\partial_i^1, p_i))$, where $\partial_i^1$ is the primal axis of the LRF.

\noindent
\textbf{Training Strategy.}
We implement the above networks in PyTorch~\cite{paszke2019pytorch} and PyTorch Geometric~\cite{fey2019fast}. We use SGD with initial learning rate 0.1, which is gradually reduced to $0.001$ using cosine annealing~\cite{loshchilov2016sgdr}. The batch size is 32 and all networks are trained for 300 epochs. We apply $80\%$ dropout rate in the fully connected layers.

\subsection{3D Shape Classification}
\noindent
\textbf{Dataset.} ModelNet40~\cite{wu20153d} consists of $12,311$ CAD mesh models from 40 categories, with $9,843$ for training and $2,468$ for testing. For fair comparison, we use the sampled point clouds provided by PointNet~\cite{qi2017pointnet} and uniformly sample $1,024$ points as input. When training, we augment the input with random scaling and translation.

\noindent
\textbf{Results.} As shown in Table~\ref{table:class}, we compare our proposed PaRI-Conv with rotation sensitive, equivariant and invariant methods. 
PaRI-Conv stays consistently rotation-invariant regardless of rotation augmentation during training and achieves outstanding accuracy of \textbf{92.3\%} with normal and \textbf{91.4\%} without normal, which outperform all comparison methods on unaligned data. 
Comparing to rotation-sensitive methods, their performance degrades drastically when testing on unseen rotations under $z/{\rm SO}(3)$ setting. With augmentation (${\rm SO}(3)/{\rm SO}(3)$), there is still a noticeable performance gap ($\sim10\%$). 
Comparing to RI methods~\cite{zhang2019rotation,xu2021sgmnet,zhang2020learning,zhao2019rotation,NIPS2020KimLocal}, our method significantly improves former best performance from LGR-Net~\cite{zhao2019rotation} by 1.2\%. Specially, ours with point only can still outperform other RI methods that apply normals, which justifies the significance in countering the pose information loss suffered by these RI methods. Moreover, comparing to equivariant methods~\cite{thomas2018tensor, poulenard2021functional, Deng_2021_ICCV} that are free of the pose information loss, PaRI-Conv surpasses them even more, which can be attributed to the high flexibility of our proposed factorized dynamic kernel over their constrained kernels. 
Finally, our method is also more effective than methods that learn to transform the input point cloud globally~\cite{li2021closer, rotpredictor}, while avoiding exhaustive rotation or permutation augmentation.

\begin{table}[htbp]
	\begin{center}
		\scalebox{0.8}{
		\begin{tabular}{c|lcccc}
			\hline
			&Method & Inputs & $z/z$ & $z/{\rm SO}(3)$ & \shortstack{${\rm SO}(3)/$\\${\rm SO}(3)$} \\
%			& &  &  &  &  \\
			\hline
			\multirow{5}{*}{\shortstack{Rotation-\\sensitive}}
			& PointNet~\cite{qi2017pointnet} & pc & 89.2 & 16.4 & 75.5 \\
			& PointNet++~\cite{PointNet++} & pc+n & 91.8 & 18.4 & 77.4 \\
			& PointCNN~\cite{li2018pointcnn} & pc& 92.2 & 41.2 & 84.5 \\
			& DGCNN~\cite{dgcnn} & pc & 92.2 & 20.6 & 81.1 \\
			& RotPredictor~\cite{rotpredictor} & pc & 92.1 & - & 90.7\\
			\hline
			\multirow{3}{*}{\shortstack{Rotation-\\equivariant}}
			&TFN~\cite{thomas2018tensor} & pc & 88.5 & 85.3 & 87.6 \\
			&TFN-NL~\cite{poulenard2021functional} & pc & 89.7 & 89.7 & 89.7 \\
			&VN-DGCNN~\cite{Deng_2021_ICCV} & pc & 89.5 & 89.5 & 90.2 \\
			\hline
			\multirow{8}{*}{\shortstack{Rotation-\\invariant}}
			&SF-CNN~\cite{rao2019spherical} & pc+n & 92.3 & 85.3 & 91.0 \\
			&RI-CNN~\cite{zhang2019rotation} & pc & 86.5 & 86.4 & 86.4 \\
			&Li~\etal~\cite{li2021rotation} & pc & 89.4 & 89.3 & 89.4 \\
			&SGMNet~\cite{xu2021sgmnet} & pc & 90.0 & 90.0 & 90.0\\
			&AECNN~\cite{zhang2020learning} & pc& 91.0 & 91.0 & 91.0 \\
			&LGR-Net~\cite{zhao2019rotation} & pc+n & 90.9 & 90.9 & 91.1\\
			&RI-GCN~\cite{NIPS2020KimLocal} & pc+n & 91.0 & 91.0 & 91.0\\
			&Li~\etal~\cite{li2021closer} & pc & 90.2 & 90.2 & 90.2\\
			\hline
			&Ours & pc 		& \textbf{91.4}	& \textbf{91.4} & \textbf{91.4} \\
			&Ours & pc+n& \textbf{92.4} & \textbf{92.4} & \textbf{92.3} \\
			\hline
		\end{tabular}
	}
	\end{center}
	\vspace{-10pt}
	\caption{Shape classification accuracy ($\%$) on ModelNet40 dataset under three train/test settings. `pc' and `n' stands for 3d coordinates and normals of the input point cloud, respectively.}
	\label{table:class}
	\vspace{-10pt}
\end{table}

\subsection{Shape Part Segmentation}

\noindent
\textbf{Dataset.}
For shape part segmentation, we evaluate our method on ShapeNetPart Dataset~\cite{shapenet}. It contains $16,881$ 3D shapes from 16 categories. In each category, shapes are labeled with $2\sim 5$ parts, resulting in 50 parts in total. We follow the commonly applied train-test split in~\cite{PointNet++} and randomly sample $2,048$ points from each shape as input.

\noindent
\textbf{Results.}
We utilize mean Intersection-over-Union (mIoU) over all instances as evaluation metric. As shown in Table~\ref{table:partseg}, similar conclusion can be drawn. Our method outperforms all comparison methods and shows consistent performance under unseen rotations. Moreover, ours without normal also surpasses methods with normal, \eg, LGR-Net~\cite{zhao2019rotation} and RI-GCN~\cite{NIPS2020KimLocal}, showing the significance of preserving pose information. We also visualize segmentation results under $z/{\rm SO}(3)$ setting in Figure~\ref{fig:partseg}, where most parts are well segmented under unseen rotations.

\begin{table}[htbp]
	\begin{center}
								\setlength{\tabcolsep}{5mm}
	\scalebox{0.8}{
		\begin{tabular}{lcc}
			\hline
			Method & $z/{\rm SO}(3)$ & ${\rm SO}(3)/{\rm SO}(3)$ \\
			\hline
			\multicolumn{3}{c}{Input pc only} \\
			\hline
			PointNet~\cite{qi2017pointnet} &37.8 & 74.4 \\
			PointNet++~\cite{PointNet++} & 48.2 & 76.7 \\
			DGCNN~\cite{dgcnn} & 37.4 & 73.3 \\
			RSCNN~\cite{liu2019relation} & 50.7 & 73.3\\
			\hline
			RI-CNN~\cite{zhang2019rotation} & 75.3 & 75.3\\
			VN-DGCNN~\cite{Deng_2021_ICCV} & 81.8 & 81.8 \\
			RI-Framework~\cite{li2021rotation} & 82.2 & 82.5\\
			Li \etal~\cite{li2021closer} & 81.7 & 81.7\\
			\textbf{Ours (pc)} & \textbf{83.8} & \textbf{83.8} \\
			\hline
			\multicolumn{3}{c}{Input pc+normal} \\
			\hline
			RI-GCN~\cite{NIPS2020KimLocal} & 77.2 & 77.3\\
			LGR-Net~\cite{zhao2019rotation} & - & 82.8\\

			\textbf{Ours (pc+normal)}  & \textbf{84.6} & \textbf{84.6} \\
			\hline
		\end{tabular}
	}
	\end{center}
				\vspace{-10pt}
	\caption{Part segmentation results on ShapeNetPart dataset~\cite{shapenet}. We report mIoU ($\%$) over all instances under two train/test settings. 'pc' stands for 3d coordinates of the input point cloud.}
	\label{table:partseg}
				\vspace{-10pt}
\end{table}

\begin{figure}[htbp]
	\centering
	\includegraphics[width=0.48\textwidth]{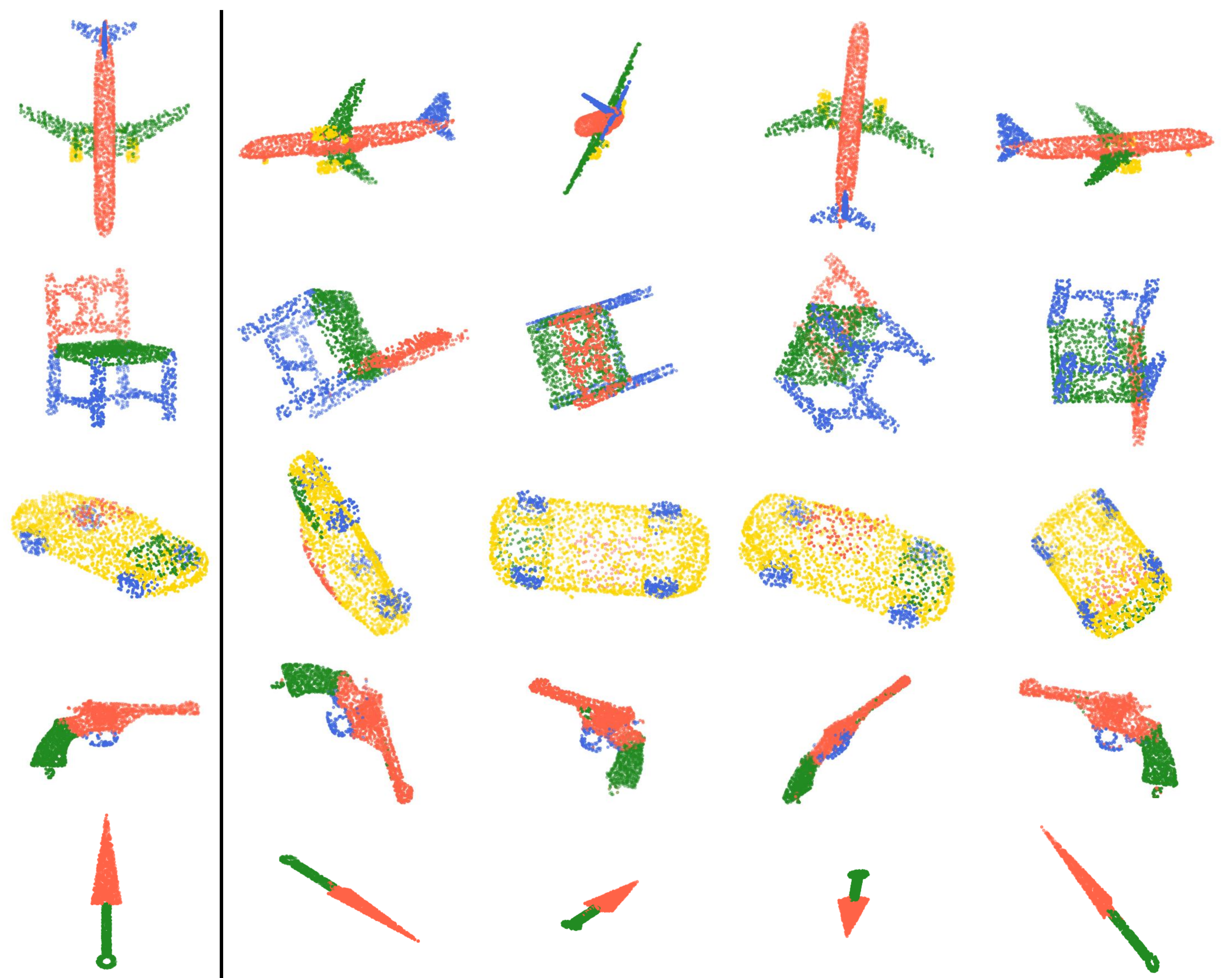}
		\vspace{-10pt}
	\caption{Visualization of part segmentation results on ShapeNetPart Dataset~\cite{shapenet} under $z/{\rm SO}(3)$ setting. The ground truth is in the left-most column. The rest columns are our testing results under shown rotations.}\label{fig:partseg}
	\vspace{-10pt}
\end{figure}

\subsection{Real-world Classification}

\noindent
\textbf{Dataset.} To test the robustness of our RI method for real-world applications, we evaluate the proposed PaRI-Conv on scanned indoor objects from ScanObjectNN~\cite{uy2019revisiting}. We choose the $\mathit{OBJ\_BG}$ subset, which contains $2,902$ objects from 15 categories. In addition to common nuisances, such as noise, incompleteness and deformations, data in this subset are also cluttered with background, which poses great challenges to the stability of rotation invariance. Note that normal is not applied since it is not available on their processed dataset. 

\noindent
\textbf{Results.} As shown in Table~\ref{table:scan}, our method can also achieve consistent invariance to rotations even when various real-world nuisances exist. Since the axis $\overrightarrow{Op_r}$ applied by \textbf{ours (pc)} can be very unstable due to the shift of global center $O$ when cluttered background exists. We replace $\overrightarrow{Op_r}$ with a PCA-based primal axis, denoted as \textbf{ours (PCA)}. Though ours (pc) is less competitive, ours (PCA) achieves the best performance, and outperforms the strongest competitor LGR-Net by 2.1\%. More importantly, when comparing the results on aligned data (\ie, $z/z$), ours (PCA) even surpasses most standard 3D deep learning methods (\eg, DGCNN~\cite{dgcnn}), which demonstrates that explicitly modeling the relative poses via our PaRI-Conv can not only eliminate the pose information loss but also counter various perturbations existing in real-world data.

\begin{table}[htbp]
	\begin{center}
						\setlength{\tabcolsep}{4.5mm}
	\scalebox{0.8}{
		\begin{tabular}{lccc}
			\hline
			Method & $z/z$ & $z/{\rm SO}(3)$ & ${\rm SO}(3)/$${\rm SO}(3)$ \\
			\hline
			\multicolumn{4}{c}{Input pc only} \\
			\hline
			PointNet~\cite{qi2017pointnet} & 73.3 & 16.7 & 54.7 \\
			PointNet++~\cite{PointNet++} & 82.3 & 15.0 & 47.4 \\
			PointCNN~\cite{li2018pointcnn} & 86.1 & 14.6 & 63.7 \\
			DGCNN~\cite{dgcnn} & 82.8 & 17.7 & 71.8 \\
			\hline
			RI-CNN~\cite{zhang2019rotation} & 74.6 & 75.3 & 75.5\\
			RI-Framework~\cite{li2021rotation} & - & 79.8 & -\\
			LGR-Net~\cite{zhao2019rotation} & 81.2 & 81.2 & 81.4\\
			\hline
			\textbf{Ours (pc)}  & 77.8 & 77.8 & 78.1 \\
			\textbf{Ours (PCA)}  & \textbf{83.3} & \textbf{83.3} & \textbf{83.3} \\
			\hline
		\end{tabular}
	}
	\end{center}
	\vspace{-10pt}
	\caption{Real-world classification accuracy ($\%$) on ScanObjectNN dataset under three train/test settings. 'PCA' denotes the primal axis $\partial_r^1$ is constructed via PCA in local neighbours.}
	\label{table:scan}
	\vspace{-10pt}
\end{table}

\subsection{Model Complexity}

Efficiency and lightness are also our key advantages. Former methods suffer from large computational cost caused by larger receptive fields~\cite{zhang2020learning} and additional network~\cite{zhao2019rotation} to incorporate the lost global information. In contrast, due to the pose-awareness, our method can obtain intact global context by simply stacking PaRI-Conv layers as in normal 2D CNNs. Moreover, PaRI-Conv is also free of the pose predictor utilized in global methods~\cite{li2021closer}. Consequently, as shown in table~\ref{table:model_complex}, our method achieves the best performance with the smallest model size and much less computational cost ($>$\textbf{48\%} FLOPs reduction).

\begin{table}[htbp]
	\centering
	\setlength{\tabcolsep}{1.2mm}
	\scalebox{0.8}{
		\begin{tabular}{l|c|cccc}
			\hline
				  & DGCNN & Li \etal~\cite{li2021closer} & AECNN~\cite{zhang2020learning} & LGRNet~\cite{zhao2019rotation} & Ours\\
			\hline
			FLOPs & 2495M & 3747M & 4841M & 5828M & \textbf{1938M} \\
			Params & \textbf{1.81M} & 2.91M & 1.99M & 5.55M & 1.85M \\
			Acc.  & 75.5 & 90.2 & 91.0 & 91.1 & \textbf{91.4} \\
			\hline
	\end{tabular}}
	\vspace{-5pt}
	\caption{Comparison of model complexity on ModelNet40 dataset, where floating point operation/sample (FLOPs), number of parameters (Params) and accuracy (Acc.) are reported.}
	\label{table:model_complex}
	\vspace{-10pt}
\end{table}

\subsection{Ablation Studies}

To demonstrate the effectiveness of each proposed component in PaRI-Conv, we further conduct ablation studies on the relative pose representation (APPF), the factorized dynamic kernel and the LRF construction. All experiments are evaluated by accuracy ($\%$) on ModelNet40~\cite{wu20153d} dataset.

\noindent
\textbf{Relative Pose Representation.} 
The pose is the key to preventing information loss. Thus, how to represent it is worth exploring.
As shown in Table~\ref{table:RIrep}, we compare our proposed APPF with: \textbf{1)} the rotation matrix and translation vector $(R,t)$, \textbf{2)} the PPF~\cite{drost2010model} defined as Equation~\ref{equ:ppf}, and \textbf{3)} APPF without direction information, which is defined as $(\|d\|_2, \cos(\alpha_1), \cos(\beta_{r,j}), \sin(\beta_{r,j}))$, where $\|d\|_2, \alpha_1, \beta_{r,j}$ can be considered as the 3-dim polar coordinates, which only encode the relative position information. 
First of all, APPF notably outperforms representations that only partially encode the pose information ($\textbf{2,3}$), demonstrating the critical role of pose-awareness in preventing information loss that exists in RI learning. Moreover, APPF also surpasses $(R,t)$ by $1.1\%$, which shows APPF is a more effective representation for deep networks to embed.

\begin{table}[htbp]
		\centering
				\setlength{\tabcolsep}{4mm}
		\scalebox{0.8}{
		\begin{tabular}{c|l|cc}
			\hline
			No.&RI Representation & Dim & ${\rm SO}(3)/{\rm SO}(3)$\\
			\hline
			1&$(R,t)$ & 12  & 91.2 \\
			2&PPF & 4 & 91.5 \\
			3&APPF-w/oDirection  & 5 & 91.4 \\
			4&APPF & 8  & \textbf{92.3} \\
			\hline
		\end{tabular}}
			\vspace{-5pt}
		\caption{Ablation study on the relative pose representation $\mathcal{P}_r^j$.}
		\label{table:RIrep}
			\vspace{-10pt}
\end{table}

\noindent
\textbf{Factorized Dynamic Kernel.}
As shown in Table~\ref{table:dy_ker}, we compare the proposed factorized dynamic kernel (FDK) with other strategies that allow pose-aware convolution, including: \textbf{1)} Concat where we ablate the FDK and directly concatenate the APPFs with the latent features, \textbf{2)} Ours-w/oEdge where we ablate the final \emph{EdgeConv}, and \textbf{3)} other dynamic kernels \cite{liu2019relation, wu2019pointconv, xu2021paconv}. With the proposed FDK, ours-w/oEdge reduces the number of parameters (up to \textbf{26}\%) and FLOPs ($>$\textbf{12}\%), while achieving better performance. This shows that FDK is more compact and flexible. Our full PaRI-Conv can further boost the performance to \textbf{92.3}\% with moderate parameters and FLOPs increment.

\begin{table}[htbp]
	\centering
		\setlength{\tabcolsep}{4.5mm}
	\scalebox{0.8}{
		\begin{tabular}{l|ccc}
			\hline
			Dynamic Kernels & Params & FLOPs & ${\rm SO}(3)/{\rm SO}(3)$\\
			\hline
			Concat. 							& 1.84M & - 	& 91.3 \\
			RSCNN~\cite{liu2019relation}  		& 1.83M & 1913M & 91.2 \\
			PointConv~\cite{wu2019pointconv} 	& 2.45M & 2264M & 91.2 \\
			PAConv~\cite{xu2021paconv}			& 2.44M	& 1768M & 89.6\\
			Ours-w/oEdge  						& \textbf{1.81M}	& \textbf{1579M}	& 91.7 \\
			Ours 								& 1.85M & 1938M & \textbf{92.3} \\
			\hline
	\end{tabular}}
	\vspace{-5pt}
	\caption{Ablation study on the factorized dynamic kernel. 
	}\label{table:dy_ker}
	\vspace{-10pt}
\end{table}

\noindent
\textbf{LRF Construction.} The selection of LRF is crucial for the robustness of rotation invariance. Recall that we introduce three relatively stable axes in Section~\ref{sec:APPF}, which are the normal $n_r$, the direction of the local barycenter $\overrightarrow{p_rm_r}=m_r-p_r$ and the direction relative to the global center $\overrightarrow{Op_r}=p_r-O$. Based on these axes, we construct various LRFs by assigning two of them as $(e_r^1,e_r^2)$ and derive the LRF following Equation~\ref{equation:lrf}.
As shown in Table~\ref{table:lrf}, the LRFs with normal generally outperform the counterparts. Moreover, replacing the local barycenter in $(n_r, \overrightarrow{p_rm_r})$ with the global center in $(n_r, \overrightarrow{Op_r})$ results in severe performance degradation ($0.6\%$), which proves our assumption that LRF should be built upon local geometry only.

\begin{table}[htbp]
	\centering
	\scalebox{0.8}{
		\begin{tabular}{l|cccc}
			\hline
			$(e_r^1, e_r^2)$ &$(\overrightarrow{p_r m_r}, \overrightarrow{Op_r})$&$(\overrightarrow{Op_r}$, $\overrightarrow{p_rm_r})$&
			$(n_r, \overrightarrow{Op_r})$&$(n_r, \overrightarrow{p_rm_r})$  \\
			\hline
			 Acc. & 91.2 & 91.4 & 91.7 & \textbf{92.3} \\
			\hline
	\end{tabular}}
	\vspace{-5pt}
	\caption{Ablation study on LRF construction.}
	\label{table:lrf}
	\vspace{-10pt}
\end{table}

\subsection{Visualization of Learned RI Features}\label{sec:vis}

We visualize the learned feature map under various rotations in Figure~\ref{fig:rot_feat_vis}. While the features learned by DGCNN \cite{dgcnn} change vastly as the shape rotates, our PaRI-Conv can learn consistent representation for corresponding points between point clouds under different rotations. 

The above results demonstrate that developing deep networks like PaRI-Conv that respect natural symmetries in 3D space, such as rotations and reflections, allows identical geometric structures to share the same convolution kernels. On the contrary, rotation-sensitive methods~\cite{dgcnn} are forced to learn redundant filters for the same structures under different poses, which indicates PaRI-Conv has the potential in developing more compact networks.
\begin{figure}[htbp]
	\centering
	\includegraphics[width=0.48\textwidth]{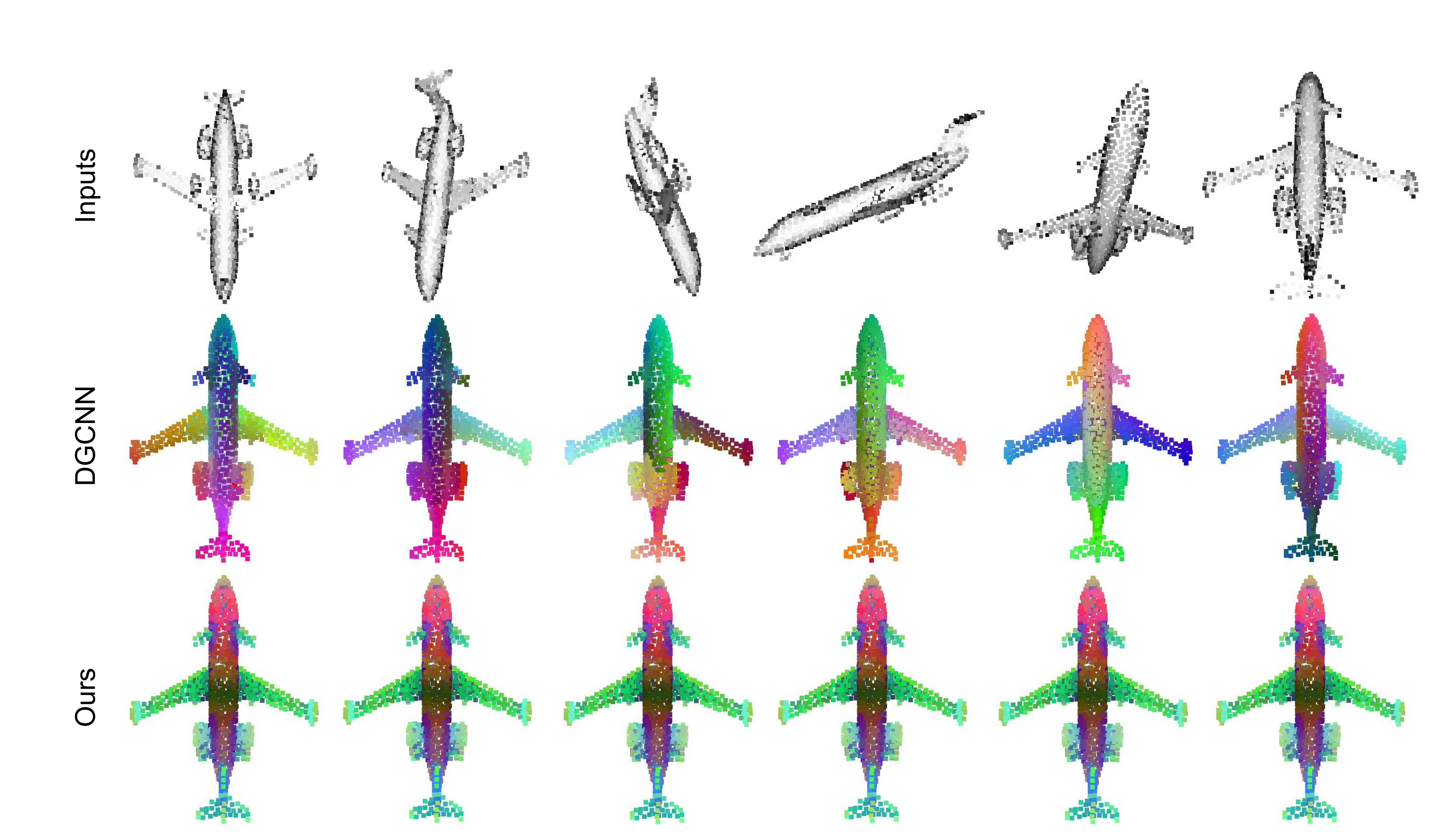}
	\caption{Visualization of dense features learned on shape classification task. While the features learned by DGCNN (second row) change vastly, our PaRI-Conv (third row) learns consistent representation for identical structures under different rotations.}\label{fig:rot_feat_vis}
	\vspace{-12pt}
\end{figure}

\subsection{Discussion}

Though our PaRI-Conv solves the information loss problem in RI learning, there remains small gap between PaRI-Conv and more recent rotation-sensitive methods~\cite{xu2021paconv,zhao2021point} on aligned data. We assume that this may be partially due to some prior that exists in aligned data. For instance, shape of beds and wardrobes are very similar, while the canonical poses (\ie, lying down vs. standing) provide strong prior, making it easier to distinguish between them, which, however, cannot be leveraged by our PaRI-Conv. To preclude this factor, we further evaluate our PaRI-Conv on aligned data by directly taking coordinates as input. Surprisingly, with normal as a stable axis, PaRI-Conv achieves $93.8\%$ overall accuracy, which outperforms recent point convolution method PAConv~\cite{xu2021paconv} ($93.6\%$) and powerful transformer based method~\cite{zhao2021point} ($93.7\%$) (see the supplementary material for details). We believe that above results reveal the significance of pose information, which has not been well exploited before, and our proposed PaRI-Conv is an effective operator in capturing pose-variant geometric structures.

\section{Conclusion, Limitation and Future Work}

We have presented PaRI-Conv, a Pose-aware convolution operator for RI learning on 3D point clouds. We have revealed that the inferior performance of current RI methods is caused by the inherent pose information loss problem, and demonstrated that PaRI-Conv can perfectly solve the problem by dynamically adapting the kernel based on the relative pose. Specifically, a new Augmented Point Pair Feature (APPF) has been proposed to effectively encode the pose. To synthesize the dynamic kernel, we factorize it into a shared basis and a low DoF dynamic kernel, which has been shown to be more lightweight without sacrificing flexibility. Experimental results show that PaRI-Conv is consistently invariant, and, by eliminating the pose information loss, can noticeably surpass the state-of-the-art RI methods with less parameters and computational cost.

Though achieving certain robustness to real-world nuisance, our method could be sensitive to extreme noises due to the instability of LRFs. Since PaRI-Conv provides a general framework for RI learning, this can be addressed by applying more robust LRFs~\cite{petrelli2012repeatable,salti2014shot}. 

In the future, we hope PaRI-Conv can pave the way for introducing rotation invariance to more tasks, \eg, semantic segmentation, object detection and registration \etc.

\appendixpageoff
\appendixtitleoff
\renewcommand{\appendixtocname}{Supplementary material}
\begin{appendices}

%%%%%%%%% TITLE - PLEASE UPDATE
\title{\emph{Supplementary Material for} \\The Devil is in the Pose: Ambiguity-free 3D Rotation-invariant \\ Learning via Pose-aware Convolution}
\emptythanks
\author{}
\date{}
%\author{First Author\\
%Institution1\\
%Institution1 address\\
%{\tt\small firstauthor@i1.org}
%% For a paper whose authors are all at the same institution,
%% omit the following lines up until the closing ``}''.
%% Additional authors and addresses can be added with ``\and'',
%% just like the second author.
%% To save space, use either the email address or home page, not both
%\and
%Second Author\\
%Institution2\\
%First line of institution2 address\\
%{\tt\small secondauthor@i2.org}
%}

\maketitle

\thispagestyle{empty}
\setcounter{section}{0}
\setcounter{figure}{0}
\setcounter{table}{0}
\renewcommand\thesection{\Alph{section}}

In this supplementary material, we first provide theoretical analysis on the rotation invariance of our proposed PaRI-Conv in Section~\ref{sec:theo}. We further show more visualization on the learned feature map in Section~\ref{sec:vis}. Section~\ref{sec:robust} investigates the robustness of PaRI-Conv. Finally, in Section~\ref{sec:discuss}, we provide detailed comparison results for the discussion section in the main paper.

%%%%%%%%% BODY TEXT
\section{Theoretical Analysis on Rotation Invariance}\label{sec:theo}
Here, we provide theoretical analysis on the rotation invariance of our proposed PaRI-Conv.

\subsection{Augmented Point Pair Feature}

We first introduce some lemmas and prove that our proposed Augmented Point Pair Feature (APPF) is rotation-invariant, which is the building block of the rotation invariance of our PaRI-Conv.
\begin{lemma}\label{lemma:angle}
Given two vector $v_1,v_2\in\mathbb{R}^{1\times 3}$, the angle between them is invariant to arbitrary rotations $R\in {\rm SO}(3)$:
\begin{equation}\label{equ:angle}
\angle(v_1, v_2)=\angle(v_1R, v_2R).
\end{equation}
\begin{proof}
Given that $\angle(v_1, v_2)\in [0,\pi]$, Equation~\ref{equ:angle} is equivalent to $\cos(\angle(v_1, v_2))=\cos(\angle(v_1R, v_2R))$, which is given by:
\begin{equation}
\begin{split}
\cos(\angle(v_1R, v_2R))&=\frac{\langle v_1R,v_2R\rangle}{\|v_1R\|\|v_2R\|}=\frac{v_1RR^\top v_2^\top }{\|v_1R\|\|v_2R\|}\\
=\frac{v_1v_2^\top}{\|v_1\|\|v_2\|}=&\frac{\langle v_1,v_2\rangle}{\|v_1\|\|v_2\|}=\cos(\angle(v_1, v_2)),
\end{split}
\end{equation}
where $\langle\cdot, \cdot\rangle$ denotes inner product.
\end{proof}
\end{lemma}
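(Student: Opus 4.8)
The plan is to reduce the equality of angles to the equality of their cosines, and then to exploit the single defining property of rotation matrices, namely orthogonality. Since both $\angle(v_1,v_2)$ and $\angle(v_1R,v_2R)$ lie in $[0,\pi]$, on which $\cos$ is strictly monotone and hence injective, it suffices to establish
\begin{equation}
\cos(\angle(v_1R,v_2R)) = \cos(\angle(v_1,v_2)).
\end{equation}

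First I would expand each cosine via the standard identity $\cos(\angle(u,w)) = \langle u,w\rangle/(\|u\|\,\|w\|)$, keeping in mind the row-vector convention so that the rotation acts on the right. The only ingredient needed is that every $R\in{\rm SO}(3)$ satisfies $RR^\top = R^\top R = I$. With this, the numerator transforms as $\langle v_1R,v_2R\rangle = v_1R(v_2R)^\top = v_1RR^\top v_2^\top = v_1v_2^\top = \langle v_1,v_2\rangle$, and the identical computation with $v_2$ replaced by $v_1$ gives $\|v_1R\|^2 = v_1RR^\top v_1^\top = \|v_1\|^2$, so each norm factor is preserved as well. Since numerator and both denominators are individually unchanged, the ratio is unchanged, and injectivity of $\cos$ on $[0,\pi]$ then lets me pass from the cosine identity back to the claimed angle identity.

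There is essentially no deep obstacle here; the whole argument rests on orthogonality, $RR^\top=I$. The one point demanding care is the bookkeeping of the convention: because $v_1,v_2$ are treated as row vectors, the rotation multiplies on the right and the transpose must be placed on the correct factor so that $R$ and $R^\top$ actually meet and cancel. A secondary subtlety, easy to overlook, is the appeal to injectivity of $\cos$ on $[0,\pi]$ — this is precisely what justifies concluding equality of the angles rather than merely of their cosines, and it is why restricting the angle to $[0,\pi]$ (as the definition of $\angle$ does) matters.
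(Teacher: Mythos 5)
Your proposal is correct and follows essentially the same route as the paper's proof: reduce the angle equality to equality of cosines (valid since $\angle(\cdot,\cdot)\in[0,\pi]$), then cancel $RR^\top=I$ in the inner product and in the norms. Your only addition is making the norm-preservation step $\|v_1R\|=\|v_1\|$ explicit, which the paper leaves implicit in its chain of equalities.
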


\begin{lemma}\label{lemma:dist}
	Given two points $p_1,p_2\in\mathbb{R}^{1\times 3}$, the distance between them is invariant to arbitrary rotations $R\in {\rm SO}(3)$:
	\begin{equation}\label{equ:dist}
	{\rm dist}(p_1, p_2)={\rm dist}(p_1R, p_2R).
	\end{equation}
	\begin{proof}
		\begin{equation}
		\begin{split}
		{\rm dist}(p_1R, p_2R)=&\sqrt{(p_1-p_2)RR^\top(p_1-p_2)^\top} \\
		=&\sqrt{(p_1-p_2)(p_1-p_2)^\top} \\
		=&{\rm dist}(p_1, p_2)
		\end{split}
		\end{equation}
	\end{proof}
\end{lemma}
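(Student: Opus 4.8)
The plan is to reduce the distance to the Euclidean norm of the difference vector and then exploit the single defining property of a rotation matrix, namely orthogonality. Writing ${\rm dist}(p_1,p_2)=\|p_1-p_2\|$, I would first observe that applying $R$ on the right distributes across subtraction, so that $p_1R-p_2R=(p_1-p_2)R$. This collapses the two-point statement into a one-vector statement: it suffices to show that the norm of any row vector $v=p_1-p_2$ is preserved under the map $v\mapsto vR$.

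For the norm-preservation step I would work with the squared norm to defer the square root until the end. Expanding $\|vR\|^2=(vR)(vR)^\top=vRR^\top v^\top$, the crux is the identity $RR^\top=I$, which holds precisely because $R\in {\rm SO}(3)\subset {\rm O}(3)$. Substituting it gives $vRR^\top v^\top=vv^\top=\|v\|^2$, and taking the nonnegative square root yields $\|vR\|=\|v\|$, hence ${\rm dist}(p_1R,p_2R)={\rm dist}(p_1,p_2)$.

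There is no substantive obstacle here; the result is immediate once the orthogonality relation $RR^\top=I$ is invoked. The only point requiring care is the row-vector convention used throughout the paper, where points live in $\mathbb{R}^{1\times 3}$ and rotations act on the right as $PR$, so that when transposing a product the transpose lands on the correct side, $(vR)^\top=R^\top v^\top$, and the factor $RR^\top$ (rather than $R^\top R$) is the one that appears in the middle. I note that this lemma is the metric analogue of Lemma~\ref{lemma:angle}: the two together certify that both kinds of quantity entering the APPF, the distance $\|d\|_2$ and the angles $\alpha_i,\beta$, are rotation-invariant, which is exactly the input the downstream rotation-invariance argument for PaRI-Conv requires.
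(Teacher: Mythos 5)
Your proof is correct and follows essentially the same route as the paper's: both reduce ${\rm dist}(p_1R,p_2R)$ to the expression $(p_1-p_2)RR^\top(p_1-p_2)^\top$ and invoke the orthogonality relation $RR^\top=I$ to conclude. Your version merely makes explicit two steps the paper leaves implicit (factoring $p_1R-p_2R=(p_1-p_2)R$ and working with the squared norm before taking the root), which is fine but not a different argument.
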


\begin{figure}[htbp]
	\centering
	\includegraphics[width=0.44\textwidth]{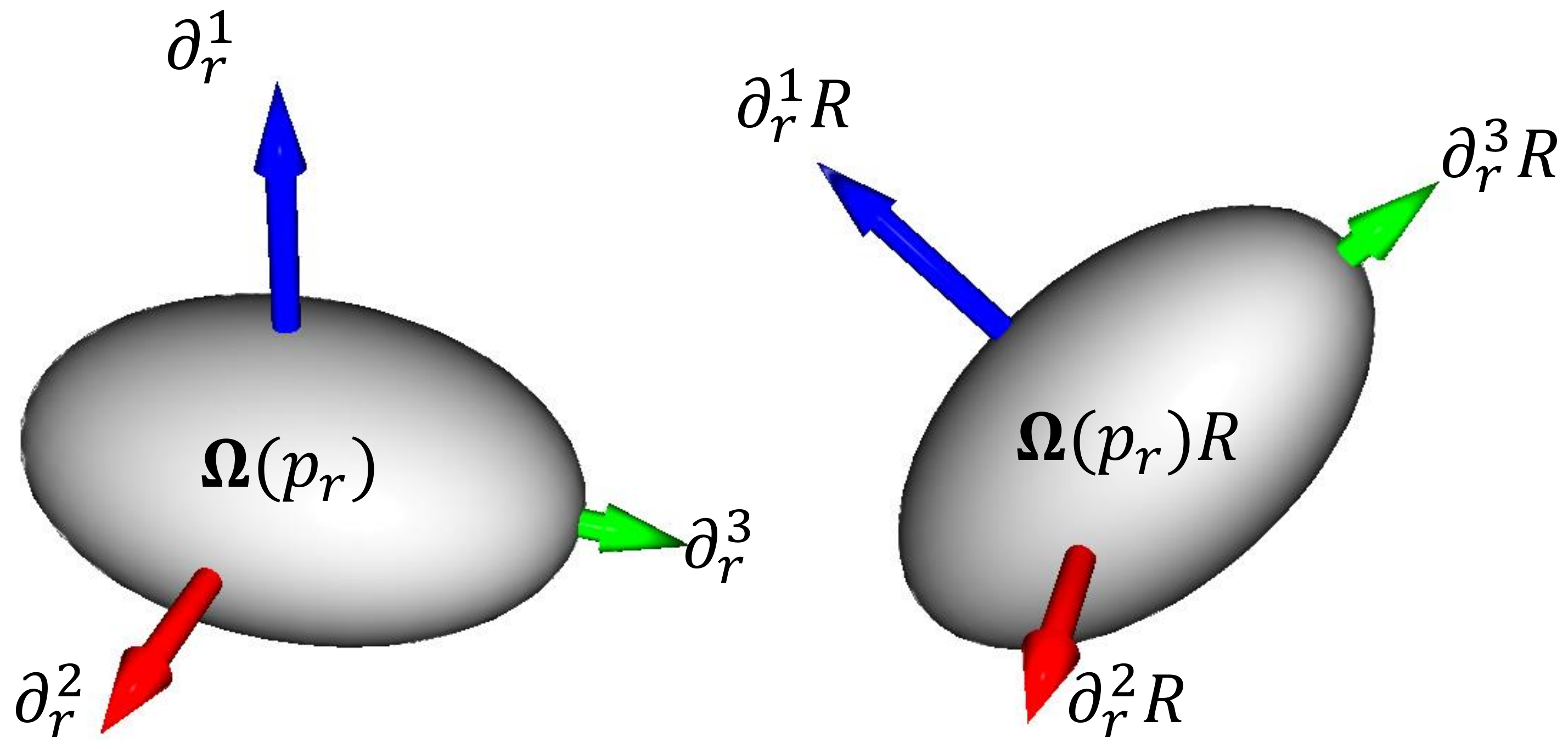}
	\caption{Illustration of the rotation equivariance of LRF.}\label{fig:lrf_prop}
\end{figure}

The rotation invariance of our APPF relies on the stability of LRF, which is expected to satisfy
\begin{equation}\label{equ:lrf}
	\partial_r^*(\bm{\Omega}(p_r)R)=\partial_r^*(\bm{\Omega}(p_r))R,
\end{equation}
where $\bm{\Omega}(p_r)$ denotes a local patch around point $p_r$. This implies an axis of the LRF $\partial_r^*$ built at $\bm{\Omega}(p_r)$ is equivariant to arbitrary rotations $\forall R\in {\rm SO}(3)$. As shown in Figure~\ref{fig:lrf_prop}, intuitively, this means a stable LRF should rotate together with the rotation of local patch $\bm{\Omega}(p_r)$.

\begin{theorem}
The APPF defined as follows is rotation-invariant.
\begin{equation}\label{equ:def_APPF}
\begin{split}
\mathcal{P}_r^j=&(\|d\|_2, \cos(\alpha_1), \cos(\alpha_2), \cos(\alpha_3),\\
&\cos(\beta_{r, j}), \sin(\beta_{r, j}), \cos(\beta_{j, r}), \sin(\beta_{j, r})),
\end{split}
\end{equation}
\end{theorem}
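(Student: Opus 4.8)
The plan is to show that applying an arbitrary global rotation $R\in{\rm SO}(3)$ leaves every one of the eight entries of $\mathcal{P}_r^j$ unchanged. First I would fix the transformation rule for the ingredients. Under a global rotation the points become $p_rR$ and $p_jR$, so the difference vector transforms as $d=p_j-p_r\mapsto dR$, while the LRF axes transform by the equivariance property of Equation~\ref{equ:lrf}, i.e. $\partial_r^i\mapsto\partial_r^iR$ and $\partial_j^i\mapsto\partial_j^iR$ for $i=1,2,3$. Thus the proof reduces to checking that each entry is a function only of rotation-invariant quantities built from $d$ and the LRF axes.

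The first four entries fall out immediately from the two lemmas. The magnitude $\|d\|_2$ is invariant by Lemma~\ref{lemma:dist}. The three cosines $\cos(\alpha_1),\cos(\alpha_2),\cos(\alpha_3)$ are cosines of angles between pairs drawn from $\{\partial_r^1,\partial_j^1,d\}$, each of which transforms by right-multiplication with $R$; hence Lemma~\ref{lemma:angle} applies verbatim and these three cosines are preserved.

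The azimuth entries $\beta_{r,j}$ and $\beta_{j,r}$ are the crux, since they are \emph{signed} angles obtained after projecting $d$ onto a plane, so Lemma~\ref{lemma:angle} does not apply directly. My approach is to re-express both $\cos(\beta_{r,j})$ and $\sin(\beta_{r,j})$ as ratios of inner products of the relevant vectors and then invoke $RR^\top=I$. Concretely, the projection $\pi_d$ of $d$ onto the $\partial_r^2\times\partial_r^3$ plane satisfies $\langle\pi_d,\partial_r^2\rangle=\langle d,\partial_r^2\rangle$ and $\langle\pi_d,\partial_r^3\rangle=\langle d,\partial_r^3\rangle$, because $\partial_r^2,\partial_r^3\perp\partial_r^1$, so one may write $\cos(\beta_{r,j})=\langle d,\partial_r^2\rangle/\|\pi_d\|$ and $\sin(\beta_{r,j})=\langle d,\partial_r^3\rangle/\|\pi_d\|$. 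Every inner product here is of the form $\langle vR,wR\rangle=vRR^\top w^\top=\langle v,w\rangle$, and $\|\pi_d\|$ is built from the same invariant inner products; hence both trigonometric values, and therefore the signed angle $\beta_{r,j}$, are unchanged. The identical argument with the roles of $r$ and $j$ interchanged handles $\beta_{j,r}$, completing the proof.

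The main obstacle I anticipate is precisely these signed azimuth angles: one must be careful that the sign convention is itself expressed through an invariant quantity (the component along $\partial_r^3$), since a naive appeal to Lemma~\ref{lemma:angle} would only control $\cos\beta$ and leave the sign carried by $\sin\beta$ ambiguous. A secondary point worth stating explicitly is that the whole argument is conditioned on the LRF satisfying the equivariance of Equation~\ref{equ:lrf}; I would therefore note that the construction in Equation~\ref{equation:lrf} inherits this property because each defining axis ($n_r$ and $\overrightarrow{p_rm_r}$) rotates with the local patch, and the cross products and normalizations in Equation~\ref{equation:lrf} commute with orthogonal transformations.
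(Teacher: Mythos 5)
Your proof is correct, and for the crucial azimuth entries it takes a genuinely different route from the paper's. The paper disposes of the first four entries by citing prior work on the rotation invariance of the original PPF (you instead derive them directly from Lemmas~\ref{lemma:angle} and~\ref{lemma:dist}, which is more self-contained), and then handles $\beta_{r,j}$ structurally: it introduces the induced transformation $[L_R\circ f](P)=f(PR)$, writes the projection explicitly as $\pi_d=d-\langle d,\partial_r^1\rangle\partial_r^1$, shows this projection commutes with rotation, $L_R\circ\pi_d=\pi_d R$, via the LRF equivariance of Equation~\ref{equ:lrf}, and finally applies Lemma~\ref{lemma:angle} to get $\angle(\partial_r^2R,\pi_dR)=\angle(\partial_r^2,\pi_d)$. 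You bypass the angle lemma entirely for these entries and instead express $\cos\beta_{r,j}=\langle d,\partial_r^2\rangle/\|\pi_d\|$ and $\sin\beta_{r,j}=\langle d,\partial_r^3\rangle/\|\pi_d\|$, reducing everything to inner products preserved by $RR^\top=I$. What each buys: the paper's argument is more modular --- it establishes equivariance of the whole projection construction, a fact reusable elsewhere --- but since Lemma~\ref{lemma:angle} only governs unsigned angles $\angle(\cdot,\cdot)\in[0,\pi]$, it implicitly leaves the sign of $\sin\beta_{r,j}$ unexamined; your version pins down the \emph{signed} azimuth through the invariant component along $\partial_r^3$, which is precisely the quantity that lets APPF fix the rotational ambiguity that PPF cannot, so your treatment is tighter on the point that matters most. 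Both arguments rest on the same essential hypothesis, the LRF equivariance of Equation~\ref{equ:lrf}, and your closing observation that the construction in Equation~\ref{equation:lrf} inherits this property (because each defining axis rotates with the local patch and cross products and normalization commute with proper rotations) makes explicit an assumption the paper leaves implicit.
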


\begin{proof}
The first four elements $\|d\|_2, \alpha_1, \alpha_2, \alpha_3$ are from the original Point Pair Feature (PPF), whose rotation invariance has been proved in~\cite{deng2018ppf}. Thus, here we only need to prove that $\beta_{r, j}, \beta_{j, r}$ are rotation invariant. Without loss of generality, we only prove the rotation invariance of $\beta_{r, j}$. And the rotation invariance of $\beta_{j, r}$ can be derived similarly. 

As shown in Figure 4(c) of the main paper, the definition of $\beta_{r, j}$ is given by $\beta_{r, j}=\angle(\partial_r^2, \pi_{d})$. We define an induced transformation $L_R$, that acts on the functions $f$ of the point cloud $P\in \mathbb{R}^{N\times 3}$ as
\begin{equation}\label{equ:def_LR}
[L_R\circ f](P)=f(PR),
\end{equation}
where $f$ can be vectors or angles in the point cloud $P$ such as $\pi_d,\beta_{r, j}$. Intuitively, $L_R$ maps a feature $f$ to itself after being rotated by $R$. Considering the property given in Equation~\ref{equ:lrf}, $L_R\circ \partial_r^2$ can be derived by:
\begin{equation}\label{equ:trans_lrf}
L_R\circ \partial_r^2=\partial_r^2(PR)=
\partial_r^2(\bm{\Omega}(p_r)R)=\partial_r^2(\bm{\Omega}(p_r))R=\partial_r^2R.
\end{equation}
Thus, we can proceed to derive:
\begin{equation}
\begin{split}
L_R\circ\beta_{r, j}=&\angle(L_R\circ\partial_r^2, L_R\circ\pi_{d}) \\
=&\angle(L_R\circ\partial_r^2, L_R\circ(d-\langle d,\partial_r^1 \rangle\partial_r^1)) \\
=&\angle(\partial_r^2R, dR-\langle dR,\partial_r^1R\rangle\partial_r^1R) \\
=&\angle(\partial_r^2R, (d-\langle d,\partial_r^1\rangle\partial_r^1)R) \\
=&\angle(\partial_r^2R, \pi_{d}R) \\
=&\angle(\partial_r^2, \pi_{d}) \\
=&\beta_{r, j},
\end{split}
\end{equation}
which proves $\beta_{r, j}$ is rotation invariant. Detailed explanation of each procedure is given as follows. Line 2 is the extension of the projection $\pi_d$. Line 3 is given by Equation~\ref{equ:trans_lrf} and the definition of $L_R$ given in Equation~\ref{equ:def_LR}. Line 4 is simply derived by algebraic operations. Line 5 is derived by the definition of $\pi_d$. Line 6 utilizes Lemma~\ref{lemma:angle}.

\end{proof}

\subsection{PaRI-Conv}
\begin{lemma}\label{lemma:input}
The network inputs defined as $\|p_i\|_2$, $\sin(\angle(\partial_i^1, p_i))$, $\cos(\angle(\partial_i^1, p_i))$ is rotation invariant.
\end{lemma}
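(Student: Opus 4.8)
The plan is to handle the three input features separately, since each reduces to an invariance property that is either already established in the preceding lemmas or immediate. Because the rotation acts on the point cloud by $P\mapsto PR$, the only quantities I need to track are how the point $p_i$ and the primal axis $\partial_i^1$ of the LRF transform under an arbitrary $R\in{\rm SO}(3)$.

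First I would dispatch the radial term $\|p_i\|_2$. Writing it as the distance from $p_i$ to the origin $O$ and observing that $OR=O$ for every $R\in{\rm SO}(3)$, the identity $\|p_iR\|_2=\|p_i\|_2$ is exactly Lemma~\ref{lemma:dist} applied to the pair $(p_i,O)$; equivalently it follows at once from $RR^\top=I$. This step introduces no new ideas.

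The angular terms $\sin(\angle(\partial_i^1,p_i))$ and $\cos(\angle(\partial_i^1,p_i))$ are where the LRF must enter. After rotation the point becomes $p_iR$, but crucially the primal axis is not a fixed vector: it is recomputed on the rotated local patch. Here I would invoke the LRF equivariance property of Equation~\ref{equ:lrf}, which gives $\partial_i^1(\bm{\Omega}(p_i)R)=\partial_i^1(\bm{\Omega}(p_i))R=\partial_i^1R$. Both arguments of the angle therefore transform by the same $R$, so Lemma~\ref{lemma:angle} yields $\angle(\partial_i^1R,p_iR)=\angle(\partial_i^1,p_i)$. Since the angle itself is invariant, so are its sine and cosine, which settles the remaining two cases.

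The argument is essentially routine; the one subtlety worth flagging is the second step, where one must resist treating $\partial_i^1$ as a constant. The invariance of the angular features hinges precisely on the \emph{equivariance} of the LRF rather than on any property of $p_i$ alone: if $\partial_i^1$ did not rotate together with the patch, the angle would change and the feature would fail to be invariant. Consequently the whole lemma rests on the LRF satisfying Equation~\ref{equ:lrf}, which is the standing assumption of the surrounding analysis, and I expect the bookkeeping of that equivariance to be the only place requiring care.
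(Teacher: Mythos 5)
Your proof is correct and follows essentially the same route as the paper, which likewise reduces the claim to Lemma~\ref{lemma:dist} for the norm and Lemma~\ref{lemma:angle} for the angular terms. In fact your write-up is slightly more careful than the paper's sketch: you explicitly invoke the LRF equivariance property of Equation~\ref{equ:lrf} to justify that $\partial_i^1$ rotates along with the patch before applying Lemma~\ref{lemma:angle}, a step the paper leaves implicit in this lemma (it only surfaces in the proof of Theorem~1).
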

\begin{sproof}
The above representation is composed of distances and angles, which are proved to be rotation invariant in Lemma~\ref{lemma:angle} and Lemma~\ref{lemma:dist}. Consequently, the representation is also rotation invariant.
\end{sproof}

\begin{theorem}
The $l$-th PaRI-Conv layer defined as follows is rotation invariant.
\begin{equation}\label{equ:pari-conv}
x_r^{l+1} = \mathop{\bigwedge}_{j\in\mathcal{N}(p_r)}\mathcal{W}(\mathcal{P}_r^j)\cdot x_j^l, 
\end{equation}
\end{theorem}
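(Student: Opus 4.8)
The plan is to prove invariance by \textbf{induction on the layer index} $l$, using the induced transformation $L_R$ introduced in Equation~\ref{equ:def_LR}. Concretely, I want to show that $L_R\circ x_r^{l+1}=x_r^{l+1}$ for every $R\in{\rm SO}(3)$, which is precisely the statement that the feature recomputed on the globally rotated cloud $PR$ coincides with the one computed on $P$. For the base case $l=0$, the layer-$0$ features are the network inputs, whose rotation invariance is already guaranteed by Lemma~\ref{lemma:input}.

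For the inductive step I assume $L_R\circ x_j^l=x_j^l$ for every point $j$ and inspect the ingredients of Equation~\ref{equ:pari-conv} one at a time. First, the dynamic kernel depends on the data only through the APPF, so $L_R\circ\mathcal{W}(\mathcal{P}_r^j)=\mathcal{W}(L_R\circ\mathcal{P}_r^j)=\mathcal{W}(\mathcal{P}_r^j)$, where the last equality is the rotation invariance of the APPF $\mathcal{P}_r^j$ established in Equation~\ref{equ:def_APPF}. Second, the neighbour features $x_j^l$ are unchanged by the inductive hypothesis. Consequently each summand $\mathcal{W}(\mathcal{P}_r^j)\cdot x_j^l$ is invariant under $L_R$, and applying the aggregator $\bigwedge$ over an unchanged index set yields $L_R\circ x_r^{l+1}=x_r^{l+1}$, which closes the induction.

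The one delicate point, and the step I expect to be the \emph{main obstacle}, is justifying that the \emph{index set} $\mathcal{N}(p_r)$ over which we aggregate is itself preserved by the global rotation; invariance of the individual summands is worthless if the set of neighbours changes. This must be argued according to how each $k$nn graph is built. For the first layer the neighbours are chosen by Euclidean distance, which is rotation invariant by Lemma~\ref{lemma:dist}, so the distance ranking of candidate points, and hence $\mathcal{N}(p_r)$, is identical on $P$ and $PR$. For the deeper layers the $k$nn is performed in feature space, but by the inductive hypothesis the features are already invariant, so their pairwise distances are invariant as well and the selected neighbourhood is again unchanged. With the neighbourhood pinned down, the remainder is the routine bookkeeping above, and the concrete PaRI-Conv with the factorized kernel $W_j=\bm{\Lambda}(\theta_j)B$ and the subsequent EdgeConv aggregation follows by the same reasoning, since $B$ is a fixed shared matrix and the MAX, the one-layer MLP and the concatenation all act only on already-invariant quantities.
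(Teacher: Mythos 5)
Your proof is correct and follows essentially the same route as the paper's: induction on the layer index with base case given by Lemma~\ref{lemma:input}, then pushing the induced transformation $L_R$ through the aggregation in Equation~\ref{equ:pari-conv} using the rotation invariance of the APPF and the inductive hypothesis on the features. Your explicit verification that the index set $\mathcal{N}(p_r)$ is itself preserved under rotation --- Euclidean $k$nn by Lemma~\ref{lemma:dist}, feature-space $k$nn by the inductive hypothesis --- addresses a step the paper's proof leaves implicit, and is a legitimate strengthening rather than a deviation.
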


\begin{proof}
We first assume the statement is true for layer $k-1$, which implies that the output features $x^k$ is rotation invariant. Then, given that $\mathcal{P}_r^j$ is also rotation invariant, we can proceed to derive:
\begin{equation}
\begin{split}
L_R\circ x_r^{k+1}=&L_R\circ \left(\mathop{\bigwedge}_{j\in\mathcal{N}(p_r)}\mathcal{W}(\mathcal{P}_r^j)\cdot x_j^k\right) \\
=&\mathop{\bigwedge}_{j\in\mathcal{N}(p_r)}\mathcal{W}(L_R\circ \mathcal{P}_r^j)\cdot \left(L_R\circ x_j^{k}\right) \\
=&\mathop{\bigwedge}_{j\in\mathcal{N}(p_r)}\mathcal{W}(\mathcal{P}_r^j)\cdot x_j^{k} \\
=&x_r^{k+1},
\end{split}
\end{equation}
which proves the statement is also true for layer $k$. 

Moreover, given that the input feature $x^0_j$ of layer $l=0$ is rotation invariant (Lemma~\ref{lemma:input}), the statement is true for layer 0. Thus, by mathematical induction, the statement is true for all $l$.
\end{proof}

\section{Visualization}\label{sec:vis}
\begin{figure}[htbp]
	\centering
	\includegraphics[width=0.48\textwidth]{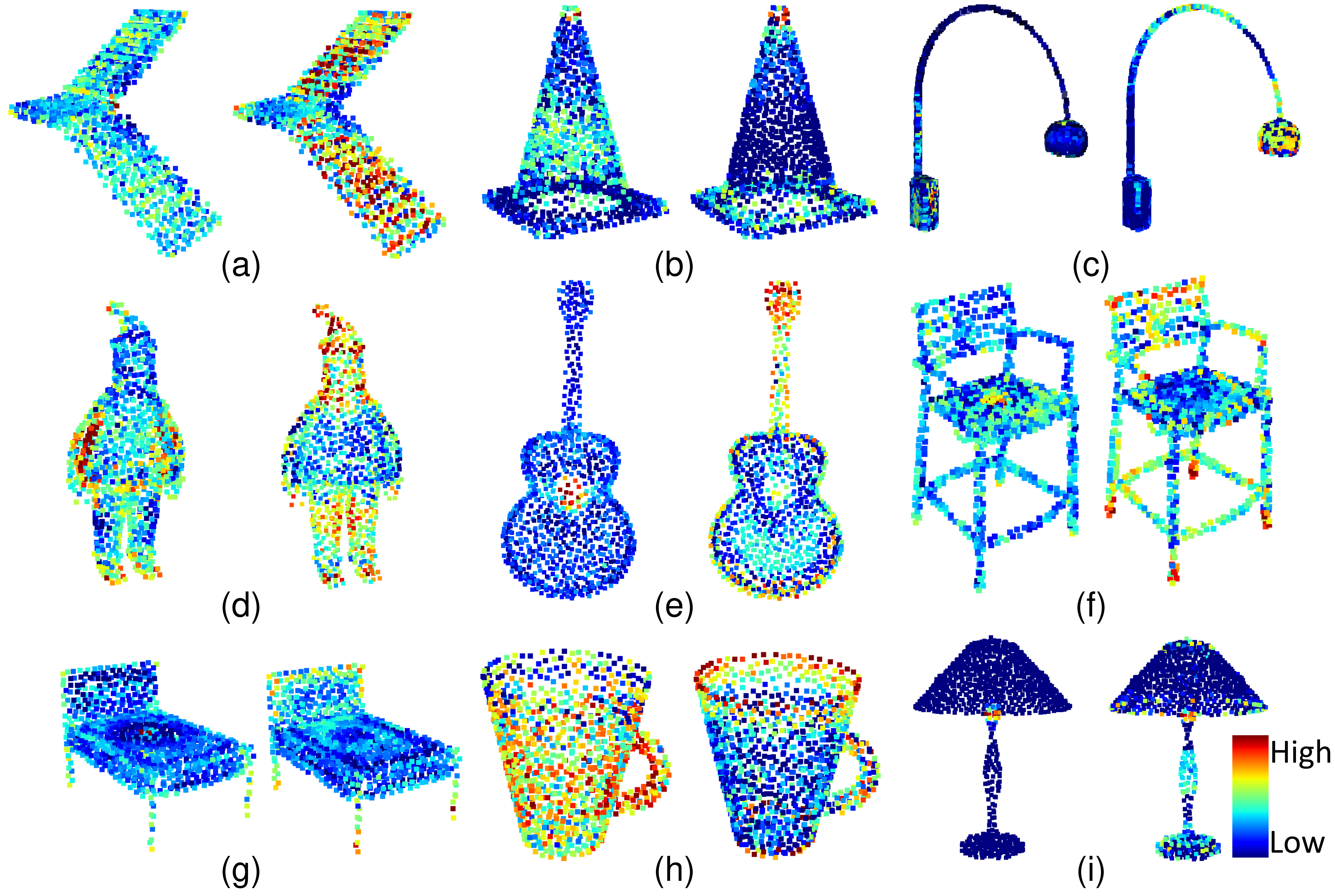}
	\caption{Low-level and high-level features learned by PaRI-Conv. In each image pair, the left shows low-level features learned at layer 2 and the right shows high-level features learned at layer 4.}\label{fig:vis_act}
\end{figure}
\subsection{Feature Activation}
We visualize the features learned by our PaRI-Conv on ModelNet40. As shown in Figure~\ref{fig:vis_act}, we colourize the features according to the level of activation at layer 2 (left) and layer 4 (right). We observe that the shallower layer tends to capture low-level features such as planes (a, c, f), conical surface (b, h), circles (e, h) and corners (d, i). In deeper layer, PaRI-Conv learns high-level structures such as stairs (a), cones (b), lamp holders (c), heads (d), guitar necks and head-stocks (e), \etc.

\subsection{Dense Rotation Invariant Features}
\noindent
\textbf{Implementation Detail for Figure 6 in the Main Paper.}
We use T-SNE to map the learned features into 3D representations, which are then normalized to $[0,1]$ to represent the RGB values.

Moreover, we observe that our PaRI-Conv is also invariant to reflections, \ie, PaRI-Conv can map identical structures in the same shape (\eg, left and right wings and engines) into similar representations. However, the representations learned by DGCNN tend to be bound with the absolute location information and fail to stay invariant. We assume that the reason for this invariance of PaRI-Conv is that the reflection (similar to rotation) preserves the relative poses between neighbouring structures. Thus, the $\mathcal{W}(\mathcal{P}_r^j)$ in Equation~\ref{equ:pari-conv} remains the same, leading to invariance to reflections. We leave further exploration on this property for future work.

\noindent
\textbf{Feature Similarity.}
We further investigate the feature similarity between different points on the same shape. As shown in Figure~\ref{fig:rot_feat_vis_supp}, we visualize the similarity between the circled point and other points. Different form rotation-sensitive methods, such as DGCNN~\cite{dgcnn}, our PaRI-Conv can also map identical structures in different poses (\eg, the left, right wings and edge of the bathtub) into the same features. This indicates even on aligned data that are free of rotation perturbation, PaRI-Conv still has the potential in developing more compact networks via kernel weight sharing.

\begin{figure}[htbp]
	\centering
	\includegraphics[width=0.48\textwidth]{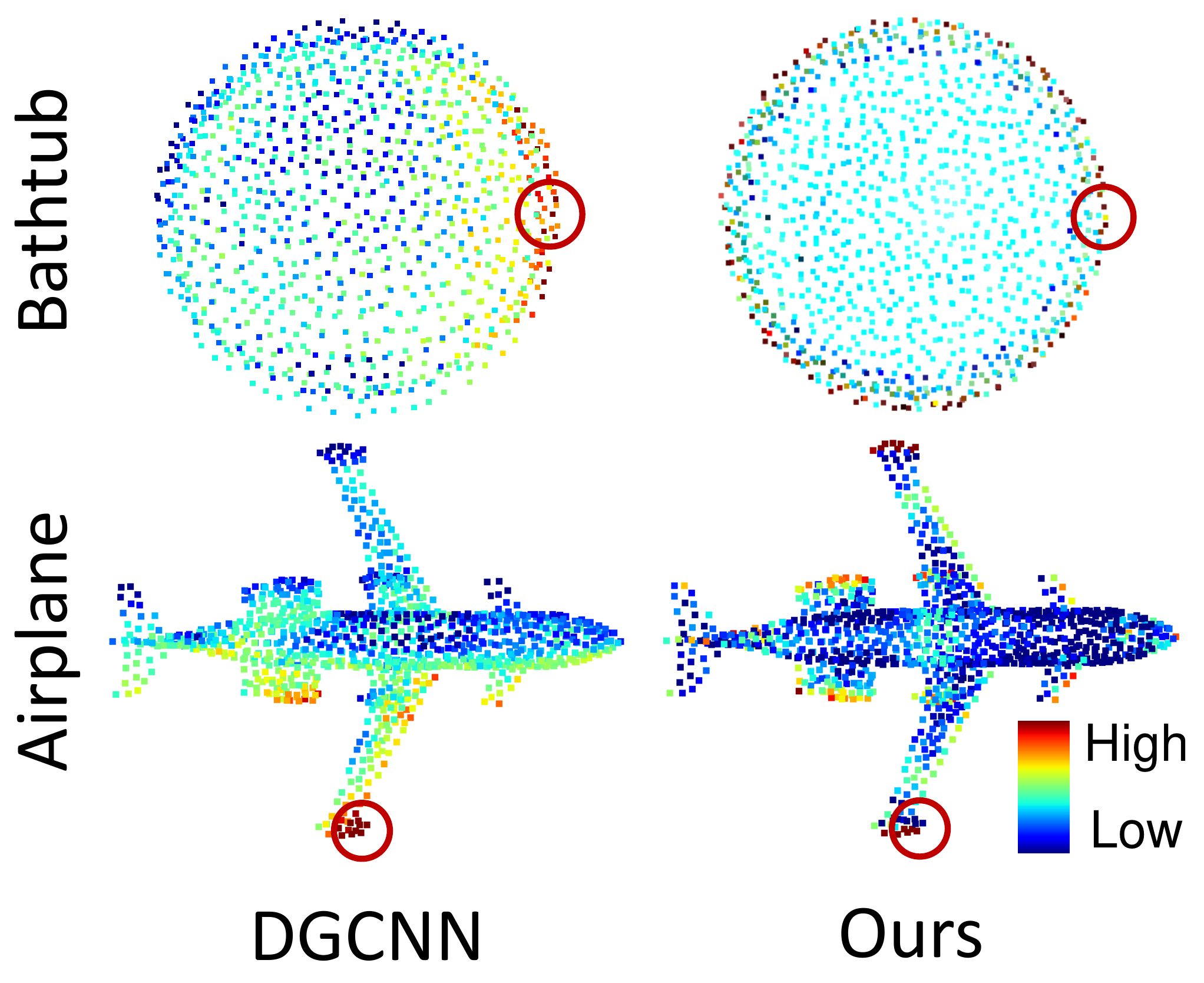}
	\caption{Feature similarity between the circled point and other points learned by DGCNN and our PaRI-Conv on ModelNet40 dataset.}\label{fig:rot_feat_vis_supp}
\end{figure}

\section{Robustness analysis}\label{sec:robust}

\subsection{Robustness to Sampling Density}

Figure~\ref{fig:noise} a) shows the results under varying density. During training, the input point clouds have 1024 points and are augmented with random point dropout. Our method can achieve reasonable performance with even half of the points and consistently outperforms Li \etal~\cite{li2021rotation}. Moreover, it also shows that applying normal as the primal axis can significantly improve the resistance to lower density.

\subsection{Robustness to Noises}
Figure~\ref{fig:noise} b) shows the results under Gaussian noises of different standard variation (Std). Here, the normals are directly extracted from noisy point clouds via PCA. Our method has certain robustness to noises and can outperform RI-GCN~\cite{NIPS2020KimLocal} when $\sigma<$ 0.04. The inferior performance under extreme noise could be attributed to the instability of current LRF, which can be addressed by applying more stable LRFs.
\begin{figure}[htbp]
	
	\centering
	\includegraphics[width=0.48\textwidth]{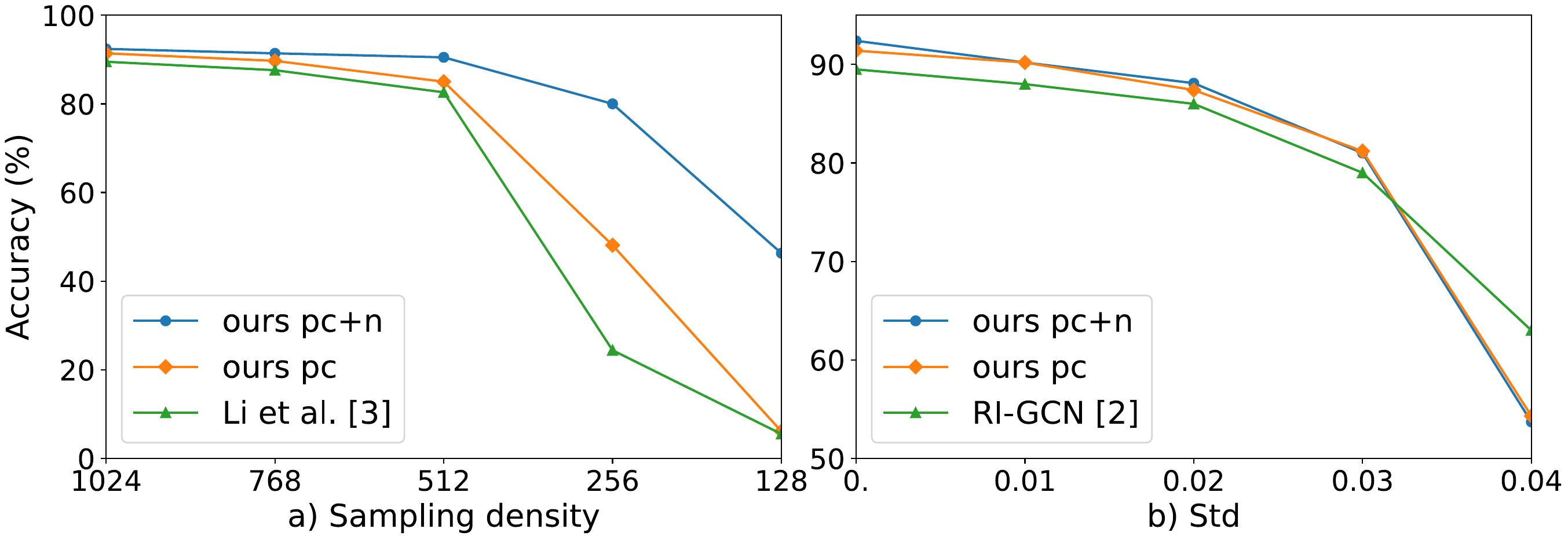}
	
	\caption{Comparison results with \textbf{a)} varying sampling density and \textbf{b)} different scales of noises on ModelNet40 dataset under $z/{\rm SO}(3)$ setting. }\label{fig:noise}
\end{figure}

\subsection{Performance Under Different Neighbour Sizes}

As shown in Table.~\ref{table:knn}, PaRI-Conv is not quite sensitive to the neighbour size $k$, and achieves the best result with $k=20$.
\begin{table}[htbp]
	\centering
	\setlength{\tabcolsep}{4mm}
	\scalebox{0.8}{
		\begin{tabular}{l|ccc}
			\hline
			& $k=10$ & $k=20$ & $k=40$\\
			\hline
			PaRI-Conv (pc) & 90.5  & \textbf{91.4} & 90.4 \\
			PaRI-Conv (pc+normal) & 91.5 & \textbf{92.4} & 91.7 \\
			\hline
	\end{tabular}}
	\caption{Performance under different neighbour size $k$ on ModelNet40 dataset. Results are evaluated under $z/SO(3)$ setting.}
	\label{table:knn}
\end{table}

\section{Detailed Results on the Discussion}\label{sec:discuss}

Here, we provide detailed comparison results with more recent state-of-the-art methods in Table~\ref{table:aligned}. Similar to other methods, we also directly take 3D coordinates as input. When normal is available, different from PointASNL~\cite{yan2020pointasnl} that directly uses normal as additional input attributes, we only utilize them to construct a more stable LRF for the extraction of APPF. Surprisingly, with normal as a stable axis, our PaRI-Conv achieves $\textbf{93.8\%}$ overall accuracy, which outperforms recent point convolution method PAConv~\cite{xu2021paconv} and powerful transformer based method~\cite{zhao2021point}. We believe that above results reveal the significance of pose information, which has been greatly overlooked by current point cloud analysis techniques. Moreover, this also demonstrates that the proposed PaRI-Conv is an effective operator in capturing pose-variant geometric structures.

\begin{table}[htbp]
	\centering
	\begin{tabular}{l|ccc}
		\hline
		Methods & input & mAcc & OA\\
		\hline
		DGCNN~\cite{dgcnn} & pc & 90.2 & 92.9 \\
		PointASNL~\cite{yan2020pointasnl} & pc & - & 92.9 \\
		PointASNL~\cite{yan2020pointasnl} & pc+normal & - & 93.2 \\
		PAConv~\cite{xu2021paconv} & pc & -  & 93.6  \\
		AdaptConv~\cite{zhou2021adaptive}  & pc & 90.7  & 93.4 \\
		PointTransformer~\cite{zhao2021point} & pc & 90.6 & 93.7 \\
		\hline
		Ours & pc & 90.4 & 93.2 \\
		Ours (normal in LRF) & pc & \textbf{91.3} & \textbf{93.8} \\
		\hline
	\end{tabular}
	\vspace{-5pt}
	\caption{Performance comparison with state-of-the-art rotation-sensitive methods on ModelNet40 dataset. Rotation perturbation is not applied. Our proposed PaRI-Conv directly takes 3D coordinates as input.}
	\label{table:aligned}
	\vspace{-10pt}
\end{table}

\end{appendices}
%%%%%%%%% REFERENCES
{\small
	\bibliographystyle{ieee_fullname}
	\bibliography{main}
}

\end{document}